\documentclass{article}
\usepackage[preprint]{neurips_2023}
\usepackage[utf8]{inputenc}
\usepackage[T1]{fontenc}
\usepackage{microtype}
\usepackage{amsmath,amssymb,amsthm,mathtools}
\usepackage{booktabs,multirow,array}
\usepackage{graphicx}
\usepackage{xcolor}
\usepackage[capitalize,noabbrev]{cleveref}

\graphicspath{{figures/}}

\newtheorem{proposition}{Proposition}

\theoremstyle{definition}\newtheorem{definition}{Definition}
\theoremstyle{remark}\newtheorem{remark}{Remark}

\newcommand{\cheap}{\mu}            
\newcommand{\strong}{y}             
\newcommand{\human}{h}              
\newcommand{\anchorset}{\mathcal{A}}
\newcommand{\gapstat}{\tilde{g}}
\newcommand{\gapbase}{\tilde{q}_0}
\newcommand{\guard}{W}
\newcommand{\wealth}{\mathcal{W}}   
\newcommand{\alphasys}{\alpha_{\mathrm{sys}}}
\newcommand{\alphajudge}{\alpha_{\mathrm{judge}}}

\title{Who Drifted: the System or the Judge?\\Anytime-Valid Attribution in LLM Evaluation Pipelines}
\author{Yitao Li\\\texttt{yitaoli416@gmail.com}}

\begin{document}
\maketitle
\begin{abstract}
Continuous evaluation of LLM products relies on a strong LLM judge treated as ground
truth: a cheap monitor scores every interaction and a team is paged when the score drifts
down. But the judge is itself a model behind an API, and a silent version bump or
scoring-prompt update changes how it scores --- so every drift alarm is ambiguous between
a worse product and a changed judge. We resolve the ambiguity with a fixed, human-labeled
\emph{anchor set} that the \emph{current} judge re-scores at a steady interleave, a second
betting e-process on the judge-versus-human gap, and a guard-window rule returning a
verdict in $\{\texttt{none}, \texttt{system}, \texttt{judge}\}$. We prove anytime-validity,
one-way identification (only the judge can move the anchors), an attribution race whose
design law is that the anchors must out-run the main process they guard, and process
orthogonality. On two \emph{real} judge changes, a silent version bump is detected as
judge drift in $60/60$ runs with zero judge-to-system misattribution, and a contaminating
strict-prompt change is correctly attributed on $110$ of $120$ runs at guard width $300$
--- while the industry-default rolling $z$-test false-alarms on $75\%$ of drift-free
streams. Every experiment replicates on a second domain (TL;DR summarization) with nothing
re-tuned, and where the domains differ the differences are the ones the race predicts: the
strict-prompt change shifts scores harder there, so the anchors fire faster and attribution
becomes perfect ($240/240$). The monitor runs at $\approx 0.64$ of the cost of strong-judging
every item, or $0.21$ in a cheaper-but-deafer regime.
\end{abstract}

\section{Introduction}\label{sec:intro}


Large language model products are now evaluated continuously by other large language models. An
LLM-as-judge \citep{zheng2023judge} scores each interaction, a dashboard tracks the rolling mean, and
an on-call team is paged when the score drifts down. The implicit contract is that the judge is a
fixed yardstick, so a falling score means a falling product. But the strong judge is itself a model
behind an API: a version bump or a silent prompt-and-policy update can change how it scores, with no
announcement and no change to the product at all. When the monitor fires, the team faces a question
the dashboard cannot answer --- \emph{which thing drifted, the system or the judge?} Acting on the
wrong answer is expensive in both directions: rolling back a healthy product, or shipping a real
regression because the alarm was dismissed as ``just the judge.''

This is not a hypothetical edge case; it is a property of the tools already in use. Consider the
de-facto industry practice for watching a judge: re-test the judge-versus-human gap on a rolling
window with a $z$-test at $\alpha = 0.05$, every time a new observation arrives. Run on the same
drift-free stream our anchor process consumes, that procedure false-alarms on $75\%$ of streams where
nothing ever changes, and $16$ of $60$ of its ``detections'' fire \emph{before the change point even
exists} (\cref{sec:experiments}). Its apparent vigilance is the absence of error control, not
sensitivity: a fixed-$\alpha$ test re-run at every step has no validity guarantee at a
data-dependent stopping time. Continuous monitoring demands \emph{anytime-valid} tooling --- e-values
and test supermartingales that may be inspected at every step without inflating the false-alarm rate
\citep{ville1939, waudbysmith2024betting, howard2021timeuniform}. Once those are in place, the
detector reliably tells you \emph{that} something drifted. \emph{Attribution} --- whether it was the
system or the judge --- is the remaining hole, and it is the one we fill.

\paragraph{The construction.} We start from the prediction-powered monitor a cost-conscious team
would deploy anyway: a cheap judge $\cheap$ scored on \emph{every} item, the expensive strong judge
$\strong$ sampled under a budget, and a per-stratum prediction-powered e-process
\citep{csillag2025ppe, angelopoulos2023ppi} that accrues anytime-valid evidence of system drift
(\cref{sec:setting}). To this we add one component: a fixed, human-labeled \emph{anchor set}
$\anchorset$, held out before monitoring begins and re-scored by the \emph{current} judge at a steady
interleave, driving a second e-process that watches the judge's gap to its frozen human labels
(\cref{sec:anchor}). A guard-window rule combines the two stopping times into a verdict in
$\{\texttt{none}, \texttt{system}, \texttt{judge}\}$. The construction turns on a deliberate
asymmetry. The anchors are frozen, so the product's drift can never reach them --- improve the system
or wreck it, the gap to human labels on those fixed items does not move. Only a change in the
\emph{judge} can move it. The anchor process is therefore a one-way mirror: it sees judge drift, and
nothing else.

\paragraph{Contributions.}
\begin{enumerate}
\item \textbf{The anchor construction and guard-window attribution rule}
  (\cref{sec:anchor})\,---\,a human-anchored, anytime-valid e-process that disentangles judge drift
  from system drift, emitting an explicit $\{\texttt{none}, \texttt{system}, \texttt{judge}\}$
  verdict rather than an undifferentiated alarm.
\item \textbf{Four propositions} (\cref{sec:anchor}): anytime-validity of the anchor family,
  one-way identification (only the judge can move the anchors), the attribution race, and process
  orthogonality. The race proposition exposes a design law --- the anchor process must be provisioned
  to \emph{out-run} the main process it guards, since a contaminated reference can otherwise fire a
  spurious system alarm before the anchors catch up.
\item \textbf{Two \emph{real} judge changes}, not only synthetic drift (\cref{sec:experiments}): a
  silent version bump (\texttt{gemini-3.1-pro} $\to$ \texttt{3.5-flash}) detected as judge drift in
  $60/60$ runs with $0$ judge-to-system misattribution at the featured configuration, and a
  strict-prompt policy change that genuinely contaminates the main monitor yet is correctly
  attributed on $110$ of $120$ contaminated runs at guard width $\guard = 300$ --- both replicated
  exhibit-for-exhibit on a second domain (TL;DR summarization) with nothing re-tuned. The
  replication is a test of the theory, not only of robustness: the invariances transfer (the same
  version bump leaves the same $+0.03$ to $+0.07$ signature in both domains), and where the domains
  differ, the differences are the ones the propositions predict --- the strict-prompt shift is
  larger on summaries, so the anchors fire faster, attribution becomes perfect ($240/240$), and the
  co-provisioning band moves toward slower anchor rates, exactly the attribution race.
\item \textbf{Operational evidence that classical alternatives fail} (\cref{sec:experiments}): on the
  identical anchor stream, naive repeated $z$-testing false-alarms on $75\%$ of drift-free streams,
  and a carefully calibrated Page--Hinkley detector holds its false-alarm budget but detects the real
  lenient bump only $8\%$ of the time and requires held-out calibration streams the e-process does
  not.
\item \textbf{The cost-aware foundation} (\cref{sec:setting}): a stratified prediction-powered
  monitor that detects localized blind-spot regressions at roughly one third the cost of strong-evaluating
  every item, with an e-wealth escalation trigger that is far cheaper than covariate-triggered prior
  rules (which oversample to near full-evaluation cost or read inert under a biased cheap judge),
  while merely tying a matched fixed budget on the faintest drift.
\end{enumerate}

\Cref{sec:related} situates these contributions; \cref{sec:setting} builds the budgeted main monitor;
\cref{sec:anchor} adds the anchor construction and its guarantees; and \cref{sec:experiments}
validates all of them on two datasets and two real judge changes before \cref{sec:discussion}
discusses limitations.

\section{Related Work}\label{sec:related}


\paragraph{Prediction-powered inference and anytime-valid testing.} Prediction-powered inference
\citep{angelopoulos2023ppi, angelopoulos2023ppiplusplus} uses a cheap predictor to debias a small set
of expensive labels, and has been extended in many directions: cross-fitting
\citep{zrnic2024crossppi}, frequentist--assisted--Bayes constructions \citep{kilian2025bayesppi},
federated-and-bagged variants \citep{cortinovis2025fabppi}, conformal procedures
\citep{csillag2025conformal}, and power analysis \citep{chen2026power}. Our main monitor builds on the
\emph{prediction-powered e-value} of \citet{csillag2025ppe}, whose Thm.~2.1 our per-stratum process
inherits, and sits in the broader anytime-valid tradition: test (super)martingales and Ville's
inequality \citep{ville1939}, time-uniform concentration \citep{howard2020chernoff,
howard2021timeuniform}, betting confidence sequences \citep{waudbysmith2024betting}, e-detectors for
sequential change \citep{shin2024edetectors}, sequential comparison of forecasters
\citep{choe2021forecasters}, and prediction-powered \emph{risk} monitoring of deployed models
\citep{zhang2026ppmonitoring}. We claim no new validity theory here: the combination of a
cheap-proxy PPI estimator, an anytime-valid e-process, and drift detection already exists in this
cluster. Our \cref{sec:setting} contribution is the specific assembly --- a per-(rubric$\times$stratum)
Bonferroni e-process for \emph{localized blind-spot} drift, plus an e-wealth escalation trigger ---
benchmarked against this cluster on a single cost-versus-latency frontier. The genuinely new material
is the anchor construction and attribution analysis of \cref{sec:anchor}.

\paragraph{Active and adaptive acquisition.} Several lines couple a sampling rule to a downstream
inference or detection target: active inference that labels where uncertainty is highest
\citep{zrnic2024active}, the active-acquisition rule in Csillag et al.'s Appendix~B.2 (which we
include as a baseline), round-robin active change detection \citep{chaudhuri2024roundrobin},
adaptive-sampling change detection \citep{yi2025adaptive}, and drift-to-action controllers that pair a
detector with an acquisition policy \citep{lamaakal2026drift2act}. A cautionary result frames our
escalation finding: \citet{sfyraki2026adaptive} prove that for prediction-powered mean estimation,
uncertainty-driven adaptive sampling is asymptotically no better than a fixed budget. Our result is
\emph{consistent} with theirs --- our e-wealth trigger merely ties a matched fixed budget on the
faintest drift --- and our claim is narrowly about cost-efficiency relative to other adaptive
\emph{triggers} (covariate-driven rules waste budget or read inert under a biased cheap judge), not
asymptotic dominance over fixed-$\tau$.

\paragraph{LLM-as-judge reliability.} A growing literature studies the reliability of LLM judges
\citep{zheng2023judge}: correctly reporting results obtained with an imperfect judge
\citep{lee2026reporting}, valid downstream inference under noisy judge labels
\citep{feng2026noisy}, and quantifying judge bias and uncertainty \citep{fiedler2026bias}. That work
corrects or qualifies a \emph{static} judge --- it treats the judge's errors as a fixed, if unknown,
distortion to be debiased. We address a different failure mode: a judge whose behavior \emph{changes
over time}, and the problem of separating those changes from changes in the system under
evaluation. To our knowledge, no prior work runs a human-anchored e-process specifically to detect
and attribute drift in the judge's identity.

\section{Cost-Aware Monitoring with a Budgeted Strong Judge}\label{sec:setting}


\subsection{Stream model}
A deployment emits items $i = 1, 2, \dots$ in a stream. Each item is scored on $R$ rubrics, with
every score normalized to $[0,1]$, and carries an observable stratum label $t \in \{1, \dots, K\}$
(here a topic). Two judges are available. A \emph{cheap} judge $\cheap$ is run on \emph{every} item
and returns a vector $\cheap_i \in [0,1]^R$; a \emph{strong} judge $\strong$ returns
$\strong_i \in [0,1]^R$ but is expensive and is therefore queried only on a sampled subset of items,
under a budget. In this section $\strong$ is the trusted reference: a drop in $\strong$ is, by
assumption, a real drop in quality.

\subsection{A per-stratum prediction-powered e-process}
The monitor maintains one e-process per (rubric $\times$ stratum) cell, testing the one-sided null
that the cell mean has not fallen below a healthy bar,
\begin{equation}
  H_0^{\mathrm{sys}}:\quad \mathbb{E}[\strong_i[r] \mid i \in \text{cell } (r,t)] \;\geq\; q_0[r,t] - 0.05,
  \label{eq:sys-null}
\end{equation}
where the per-cell bar $q_0[r,t]$ is the mean strong-judge score over in-control items of stratum
$t$, estimated once from held-out calibration data, and the $0.05$ slack absorbs calibration noise.
Each cell accumulates wealth through prediction-powered e-values \citep{csillag2025ppe}: the cheap
score $\cheap_i[r]$ supplies a debiasing prediction, and on sampled items the strong label corrects
it, so a small number of strong queries yields a valid e-value with the cheap judge as free side
information \citep{angelopoulos2023ppi}. The bets are \emph{predictable} --- chosen from the past ---
and capped at an effective $\lambda_{\max}(\pi_i)$ that depends only on the (predictable) sampling
probability $\pi_i$, which keeps each cell's wealth a nonnegative supermartingale under
\cref{eq:sys-null}. An alarm fires when any cell's wealth crosses the Bonferroni threshold
$R K / \alphasys$. Validity is then inherited from \citet[Thm.~2.1]{csillag2025ppe} together with the
predictability of $\pi$ and Ville's inequality \citep{ville1939}: the family's probability of ever
falsely alarming, at any data-dependent stopping time, is at most $\alphasys$.

The per-stratum design is not a refinement but a necessity. A localized blind-spot drift --- a
quality regression confined to one stratum, on a rubric where the cheap judge is blind --- is diluted
to invisibility when pooled. Empirically, under the mild drift a single \emph{global} per-rubric PPI
e-process never detects at any budget (censoring rate $1.00$ at strong-sampling rates of $0.1$, $0.2$,
and $0.3$), whereas the stratified monitor detects at the same budgets; the localization is what makes
the signal visible.

\subsection{Acquisition modes}
The budget can be spent two ways. Under a \emph{fixed budget} the strong judge is queried with
probability $\pi_i = \tau$ (allocated uniformly, by stratum, or by a bandit). Under
\emph{evidence-triggered escalation} the probability rises from a floor $\pi_0$ toward $1$ as a
stratum's accumulated e-wealth approaches its alarm threshold,
\begin{equation}
  \pi_i \;=\; \pi_0 + (1 - \pi_0)\cdot\operatorname{clip}\!\Big(\frac{\wealth_{i-1}}{\text{threshold}},\, 0,\, 1\Big).
  \label{eq:escalation}
\end{equation}
Crucially, $\pi_i$ is a function of \emph{past} wealth only, so it is predictable and anytime-validity
is preserved by construction. The trigger keys on accumulated e-evidence rather than on the cheap
covariate, because the cheap judge is blind to exactly the drift we are trying to catch.

\subsection{Cost model}
Each judge call costs $\text{(input tokens)}\cdot\text{(input price)} + \text{(output tokens)}\cdot
\text{(output price)}$, with representative counts of $\approx 600$ input and $\approx 30$ output
tokens. At the measured prices this is $\$0.000195$ per cheap call and $\$0.00156$ per strong call
(the cheap judge runs on all $L$ processed items; the strong judge on $S$ sampled items).
Relative to strong-evaluating every item, the cost fraction is
\begin{equation}
  \text{cost-fraction} \;=\; \frac{L\, c_{\cheap} + S\, c_{\strong}}{L\, c_{\strong}}
  \;=\; 0.125 + \frac{S}{L},
  \label{eq:cost}
\end{equation}
where full evaluation ($\pi \equiv 1$) is $1.0$ by definition. The $0.125$ is the irreducible cheap
floor and $S/L$ is the realized strong-sampling rate.

\subsection{What the foundation buys}
On a semi-synthetic localized drift, every variant of the detector catches the regression at
cost-fraction $0.21$--$0.43$ --- about one third of full evaluation --- with measured false-alarm
$0.000$ across all $16$ policy points. The two modes trace a cost--reliability frontier: escalation
occupies the cheap corner ($0.21$--$0.28$, sitting near the $0.125 + \pi_0$ floor when nothing is
wrong), fixed budgets buy reliability (fixed@$0.3$ reaches censoring $0.28$ at cost $0.43$), and full
evaluation sets the latency floor (median $327$ steps to detection) at full price. On the
cost--detection plane (\cref{fig:cost}) this ordering is visible on both datasets: evidence-triggered
escalation lies up-and-left of the fixed-budget family --- more detection per strong-judge dollar once
$\pi_0$ clears the hard-drift floor --- while paying in detection \emph{latency}, its weak axis, which
the plot omits and \cref{tab:frontier} records. We defer the
frontier table and the full ten-method comparison to \cref{app:c1}. One contrast from that comparison
matters here: keying acquisition on the cheap covariate is wasteful or inert --- a boundary-proximity
trigger detects only by oversampling to cost $\approx 0.87$--$0.89$. The published
predicted-e-growth rule, meanwhile, reads identically zero under an overrating cheap judge, whereas
e-wealth escalation detects at roughly one third of that cost. On faint drift, escalation merely ties a matched
fixed budget, consistent with the result of \citet{sfyraki2026adaptive} that uncertainty-driven
sampling is asymptotically no better than fixed-$\tau$ for prediction-powered mean estimation.

\begin{figure}[t]
\centering
\includegraphics[width=\textwidth]{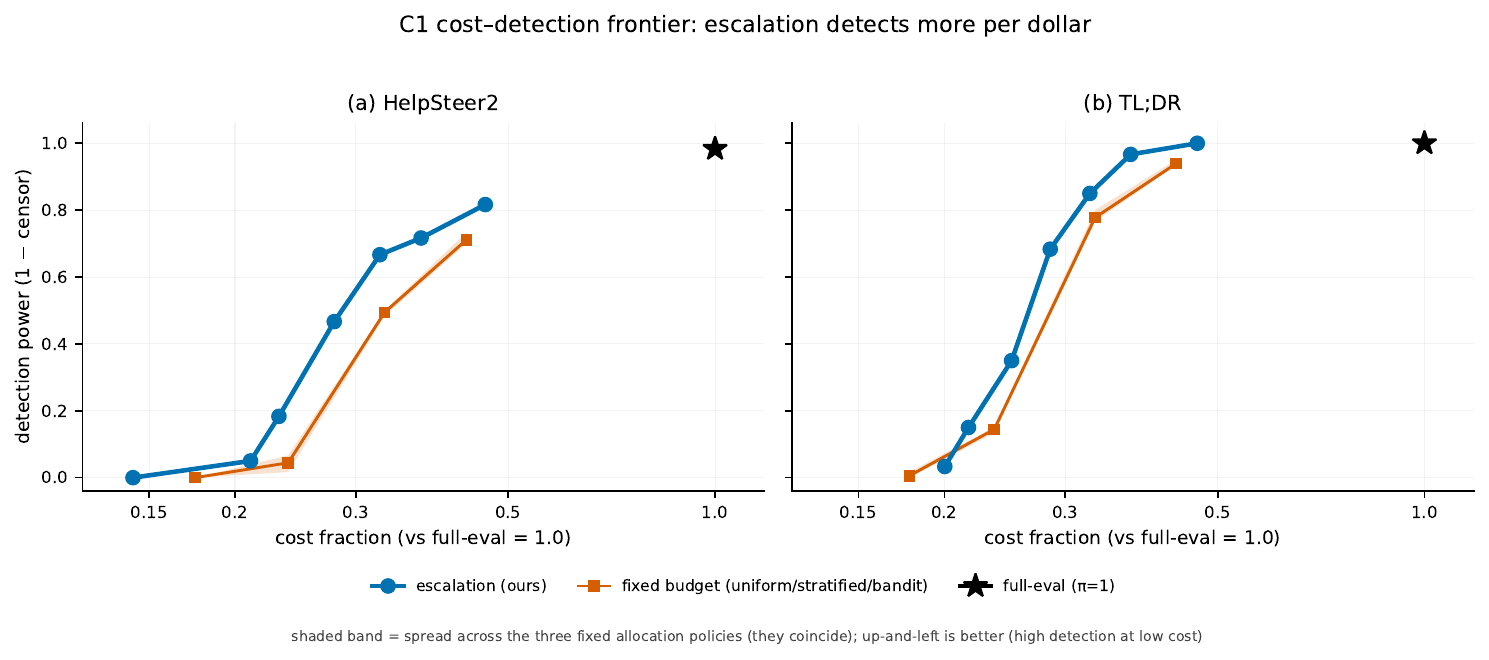}
\caption{The C1 cost--detection frontier on HelpSteer2 (left) and TL;DR (right): detection power
($1 - \text{censor}$) against cost-fraction (log axis; full strong-evaluation $= 1.0$).
Evidence-triggered escalation (ours, blue) lies up-and-left of the fixed-budget family --- whose
three allocation policies (uniform, stratified, bandit) coincide, so they are drawn as one centroid
line with a hairline min--max band --- detecting more per strong-judge dollar across the $\pi_0$
ladder; full evaluation ($\star$) buys the final increment of power at full price. Detection latency,
escalation's weak axis, is omitted here and tabulated per policy in \cref{tab:frontier}. The TL;DR
panel is the nothing-re-tuned replication detailed in \cref{sec:experiments}.}
\label{fig:cost}
\end{figure}

Everything above rests on trusting $\strong$. The next section drops that assumption.

\section{Disentangling Judge Drift: the Anchor Construction}\label{sec:anchor}


\subsection{The problem with a drifting reference}
The monitor of \cref{sec:setting} treats the strong judge $\strong$ as ground truth. But $\strong$ is
itself an LLM service: a version bump or a silent prompt/policy update can change its scoring without
notice. When that happens, a main-process alarm is ambiguous --- did the system get worse, or did the
judge change? --- and the failure is compounding: the contaminated $\strong$ feeds the very process
meant to certify system drift, so the detector can confidently report a system regression that is
really an artifact of its own reference moving. We disentangle the two with a second, independent
e-process anchored to fixed human labels.

\subsection{The anchor set}
\begin{definition}
An \emph{anchor set} $\anchorset = \{a_1, \dots, a_k\}$ is a fixed, held-out collection of items with
frozen human labels $\human(a) \in [0,1]^R$, drawn before monitoring starts and excluded from all
main-process calibration. At every position $i \equiv 0 \pmod{\rho}$ (an \emph{interleave rate}
$1/\rho$), one anchor $a$ is drawn uniformly from $\anchorset$ and re-scored by the \emph{current}
strong judge, yielding $\strong_i(a)$.
\end{definition}
Because the anchor items and their human labels never change, the only thing that can move
$\strong_i(a)$ is the judge. We summarize each re-judgement by a rescaled gap statistic
\begin{equation}
  \gapstat_i \;=\; \operatorname{clip}\!\Big(\tfrac{\strong_i(a) - \human(a) + 1}{2},\, 0,\, 1\Big) \;\in\; [0,1]^R,
  \label{eq:gap}
\end{equation}
with baseline $\gapbase \in (0,1)^R$ (a per-rubric anchor baseline, not to be confused with the
per-cell quality bars $q_0[r,t]$ of \cref{sec:setting}), the mean rescaled gap of the \emph{time-0}
judge over the full anchor set. The baseline is computed once before monitoring; because the time-0
judge scores are cached, $\gapbase$ is the exact mean over $\anchorset$ and carries no finite-sample
error.
\emph{This is the asymmetry the construction exploits: system drift changes the incoming items but
leaves the fixed anchors untouched, whereas a judge change moves both the main scores and the anchors
--- the anchors see judge drift and only judge drift.}

\subsection{The anchor e-process}
For each rubric $r$ and direction $d \in \{\text{below}, \text{above}\}$ we run a betting e-process
\citep{waudbysmith2024betting, csillag2025ppe} with e-values
\begin{equation}
  e_i \;=\; 1 + \lambda_i\, s_d\,\big(\gapbase[r] - \gapstat_i[r]\big),
  \qquad s_{\text{below}} = +1,\; s_{\text{above}} = -1,
  \label{eq:anchor-evalue}
\end{equation}
where the bet $\lambda_i \in [0, \lambda_{\max}(\gapbase[r], d)]$ is \emph{predictable} (a function of
$\gapstat_1, \dots, \gapstat_{i-1}$ only; in code the plug-in $\lambda \approx
\operatorname{mean}(d)/\operatorname{mean}(d^2)$). The wealth is $\wealth_n^{(r,d)} = \prod_{i \leq n} e_i$,
and the family alarms when $\max_{r,d} \wealth_n^{(r,d)} \geq 2R/\alphajudge$ (Bonferroni over the $2R$
rubric--direction pairs). The two directions test, respectively, that the judge has become more lenient
or more harsh on the anchors relative to the human labels.

\subsection{The attribution rule}
Let $\tau_{\mathrm{sys}}$ and $\tau_{\mathrm{anc}}$ be the first alarm positions of the main and anchor
families ($\infty$ if neither ever fires), and fix a guard window $\guard \geq 0$. The verdict is
\begin{equation}
  \text{verdict} =
  \begin{cases}
    \texttt{none}   & \text{if neither family fires,}\\[2pt]
    \texttt{judge}  & \text{if } \tau_{\mathrm{anc}} < \infty \text{ and } (\tau_{\mathrm{sys}} = \infty \text{ or } \tau_{\mathrm{anc}} \leq \tau_{\mathrm{sys}} + \guard),\\[2pt]
    \texttt{system} & \text{if } \tau_{\mathrm{sys}} < \infty \text{ and } (\tau_{\mathrm{anc}} = \infty \text{ or } \tau_{\mathrm{anc}} > \tau_{\mathrm{sys}} + \guard).
  \end{cases}
  \label{eq:attribution}
\end{equation}
The rule tests two nulls. The anchor null $H_0^{\mathrm{anc}}$ (no judge change) states that,
conditional on the past, each observed rescaled gap has mean $\gapbase[r]$ for every rubric --- the
current judge's gap-to-human distribution on the fixed anchor set matches the time-0 baseline. The
system null $H_0^{\mathrm{sys}}$ is the per-stratum main-process null of \cref{eq:sys-null}. When both
families fire within the guard window the verdict is \texttt{judge} \emph{by design}: a contaminated
$\strong$ invalidates the main alarm, so we do not attempt simultaneous attribution.

\subsection{Guarantees}
The validity claims below are inherited from standard test-supermartingale results; the contribution is
the construction these results apply to, not new probability theory. We state each proposition and a
one-line proof idea, deferring full proofs to \cref{app:proofs}.

\begin{proposition}[Anytime-validity of the anchor process]\label{prop:validity}
Under $H_0^{\mathrm{anc}}$, for every rubric--direction pair $(\wealth_n^{(r,d)})_n$ is a nonnegative
supermartingale with $\wealth_0 = 1$, and consequently
\begin{equation*}
  \mathbb{P}\Big(\exists\, n :\ \max_{r,d} \wealth_n^{(r,d)} \geq 2R/\alphajudge\Big) \;\leq\; \alphajudge,
\end{equation*}
uniformly over all stopping times: at any data-dependent monitoring horizon, the probability that the
anchor family \emph{ever} falsely fires is at most $\alphajudge$.
\end{proposition}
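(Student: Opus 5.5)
The plan has three steps: show each factor $e_i$ is a nonnegative conditional-mean-at-most-one multiplier, deduce that each wealth process is a nonnegative supermartingale, and then combine Ville's inequality with a union bound over the $2R$ rubric--direction pairs.

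\textbf{Filtration and nonnegativity.} Let $\mathcal{F}_{i-1}$ be the $\sigma$-field generated by $\gapstat_1,\dots,\gapstat_{i-1}$, together with any independent randomization used to pick anchors. Because $\gapbase$ is computed once before monitoring as an exact mean over $\anchorset$, it is a deterministic constant and does not enter the filtration. Fix a pair $(r,d)$. Since $\gapstat_i[r]\in[0,1]$, the centered term $s_d(\gapbase[r]-\gapstat_i[r])$ is bounded:
\begin{itemize}
\item for $d=\text{below}$ it lies in $[\gapbase[r]-1,\ \gapbase[r]]$;
\item for $d=\text{above}$ it lies in $[-\gapbase[r],\ 1-\gapbase[r]]$.
\end{itemize}
Take the cap $\lambda_{\max}(\gapbase[r],d)$ to be at most $1/(1-\gapbase[r])$ in the first case and $1/\gapbase[r]$ in the second. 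Both are finite because $\gapbase\in(0,1)^R$. With this cap, every $\lambda_i\in[0,\lambda_{\max}]$ gives $e_i\ge 0$. This is the one place the interior condition on $\gapbase$ and the cap are used, and I will state explicitly that the implemented cap satisfies it. If the code uses a strictly smaller cap, such as half of the bound, the argument is unchanged.

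\textbf{Supermartingale property.} Under $H_0^{\mathrm{anc}}$, $\mathbb{E}[\gapstat_i[r]\mid\mathcal{F}_{i-1}]=\gapbase[r]$. The bet $\lambda_i$ is $\mathcal{F}_{i-1}$-measurable, so
\[
\mathbb{E}[e_i\mid\mathcal{F}_{i-1}] = 1+\lambda_i s_d\bigl(\gapbase[r]-\mathbb{E}[\gapstat_i[r]\mid\mathcal{F}_{i-1}]\bigr)=1.
\]
Hence $\mathbb{E}[\wealth_n\mid\mathcal{F}_{n-1}]=\wealth_{n-1}\,\mathbb{E}[e_n\mid\mathcal{F}_{n-1}]=\wealth_{n-1}$, with $\wealth_0=1$. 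This is in fact a test martingale, which is a special case of a nonnegative supermartingale. If the null were weakened to a one-sided inequality in the direction $d$ tests, the same computation gives $\le 1$, and that yields the supermartingale claim. The plug-in $\lambda\approx\operatorname{mean}(d)/\operatorname{mean}(d^2)$ must also be clipped into $[0,\lambda_{\max}]$ and computed only from past observations, and I will note both requirements.

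There is one subtle point about indexing: the wealth advances only at anchor positions $i\equiv 0\pmod\rho$. I handle this by indexing the process by anchor draws, or equivalently by setting $e_i=1$ at non-anchor positions. Either way the supermartingale property holds in stream time, so stopping times measured in stream positions are covered.

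\textbf{Ville and union bound.} Ville's inequality \citep{ville1939} applied to each nonnegative supermartingale gives
\[
\mathbb{P}\bigl(\exists n:\ \wealth_n^{(r,d)}\ge 2R/\alphajudge\bigr)\le \alphajudge/(2R).
\]
A union bound over the $2R$ rubric--direction pairs gives the stated bound $\alphajudge$. It requires no independence across pairs; the pairs share the same anchor draws and are indeed dependent. Because this is a bound on the event that some $n$ ever crosses, it holds at every stopping time, including data-dependent ones.

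\textbf{Main obstacle.} The hard step is not the probability, which is standard, but making $H_0^{\mathrm{anc}}$ precise enough to give the conditional-mean identity. The statement must be conditional on $\mathcal{F}_{i-1}$, so that anchors are drawn uniformly and independently of the past. The clipping in \cref{eq:gap} also has to be reconciled: $\gapbase$ must be defined as the mean of the clipped statistic, which it is by construction. Once these are in place, the rest follows directly.
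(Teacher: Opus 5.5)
Your proposal is correct and follows essentially the same route as the paper's proof: nonnegativity from the directional caps $1/(1-\gapbase[r])$ and $1/\gapbase[r]$, a test martingale from predictability of $\lambda_i$ plus the null's conditional-mean identity, and then Ville's inequality at level $2R/\alphajudge$ with a union bound over the $2R$ rubric--direction pairs. Your extra remarks are sound refinements that the paper leaves implicit: stream-time indexing via $e_i = 1$ at non-anchor positions, clipping the plug-in bet into $[0,\lambda_{\max}]$, and the one-sided-null variant.
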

\begin{proof}[Proof idea]
Boundedness of $\gapstat_i$ and the directional bound on $\lambda_i$ make each $e_i \geq 0$;
predictability of $\lambda_i$ together with $\mathbb{E}[\gapstat_i[r] \mid \text{past}] = \gapbase[r]$
under $H_0^{\mathrm{anc}}$ gives $\mathbb{E}[e_i \mid \text{past}] = 1$, so each wealth is a nonnegative
supermartingale. Ville's inequality \citep{ville1939} bounds each pair's crossing probability by
$\alphajudge/2R$, and a union bound over the $2R$ pairs finishes (full proof in \cref{app:proofs}).
\end{proof}
\begin{remark}
The bound needs no distributional assumptions beyond boundedness --- the gaps need not be Gaussian,
i.i.d.\ across rubrics, or stationary in variance --- and no held-out calibration streams or fixed
horizon, in contrast to classical change detectors that must set a threshold against $H_0$ streams and
are calibrated only at that horizon.
\end{remark}

\begin{proposition}[Identification]\label{prop:identification}
The anchor observations depend only on the fixed items $\anchorset$, their frozen labels $\human$, and
the current judge. System drift --- any change in the distribution of new items or their true quality
--- leaves the anchor gap distribution unchanged, so $H_0^{\mathrm{anc}}$ can be violated only by a
change in the judge. Consequently: \textup{(a)} under pure system drift,
$\mathbb{P}(\text{verdict} = \texttt{judge}) \leq \alphajudge$, uniformly in $\guard$; and
\textup{(b)} under no drift at all, $\mathbb{P}(\text{verdict} \neq \texttt{none}) \leq \alphasys +
\alphajudge$.
\end{proposition}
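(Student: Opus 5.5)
The plan is to reduce everything to \cref{prop:validity} and the stated validity of the main monitor. I would first formalize the data-generating structure. The anchor observation at position $i \equiv 0 \pmod{\rho}$ is a measurable function $\gapstat_i = f(a_i, \human(a_i), \strong_i(a_i))$. Here $a_i$ is drawn uniformly from the fixed set $\anchorset$, independently of the stream; $\human$ is frozen; and $\strong_i(\cdot)$ is the current judge's (possibly random) scoring map. I would model system drift as a change in the law of the incoming items $(x_i, t_i)$ and their true quality only, with the judge's scoring kernel held fixed. Conditional on the past, the law of $\gapstat_i$ is then the pushforward of the uniform law on $\anchorset$ through the time-0 judge's kernel. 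That law is the same for every $i$ and does not depend on the item process. Averaging it over $\anchorset$ gives exactly $\gapbase$, because $\gapbase$ is defined as the exact mean over $\anchorset$ under the time-0 judge. So $H_0^{\mathrm{anc}}$ holds under any pure system drift, which proves the identification sentence by contraposition. The one subtlety is conditioning on a past that includes main-stream data. I would handle it by assuming the judge's randomness on anchor calls is conditionally independent of the item stream, or at least has a conditional mean unaffected by it; this is the step I expect to need an explicit modeling assumption.

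For part (a), note from \cref{eq:attribution} that $\{\text{verdict}=\texttt{judge}\} \subseteq \{\tau_{\mathrm{anc}}<\infty\}$ whatever $\guard$ is. Since $H_0^{\mathrm{anc}}$ holds, \cref{prop:validity} gives $\mathbb{P}(\tau_{\mathrm{anc}}<\infty) = \mathbb{P}(\exists n: \max_{r,d}\wealth_n^{(r,d)} \ge 2R/\alphajudge) \le \alphajudge$. This bound is uniform in $\guard$, and the main process can be arbitrarily non-null without affecting it.

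For part (b), with no drift both $H_0^{\mathrm{anc}}$ and $H_0^{\mathrm{sys}}$ hold; the judge is unchanged, so $\strong$ remains a valid reference and \cref{eq:sys-null} holds in every cell. The verdict differs from \texttt{none} exactly when $\tau_{\mathrm{sys}}<\infty$ or $\tau_{\mathrm{anc}}<\infty$. A union bound, the main monitor's anytime guarantee $\mathbb{P}(\tau_{\mathrm{sys}}<\infty)\le\alphasys$ (inherited from \citet[Thm.~2.1]{csillag2025ppe} and Ville, as stated in \cref{sec:setting}), and \cref{prop:validity} give the bound $\alphasys+\alphajudge$. The union bound requires no independence between the two families.

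The main obstacle is making the identification claim rigorous rather than definitional, specifically fixing the filtration. I would take the filtration to be generated by both streams. I would then verify that the predictable bets $\lambda_i$ remain predictable with respect to it, since they depend only on past anchor gaps. I would also state the conditional-independence assumption on judge calls so that the conditional mean $\mathbb{E}[\gapstat_i[r]\mid\text{past}]=\gapbase[r]$ survives conditioning on the system stream.
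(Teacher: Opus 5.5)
Your proposal is correct and follows essentially the same route as the paper. Identification holds because each anchor observation depends only on $(\anchorset,\human,\text{current judge})$; part (a) follows from $\{\text{verdict}=\texttt{judge}\}\subseteq\{\tau_{\mathrm{anc}}<\infty\}$ together with \cref{prop:validity}, uniformly in $\guard$; and part (b) is a union bound over the two families' false-fire events. Your explicit filtration and conditional-independence assumptions sharpen what the paper leaves implicit. One caveat: if you allow a genuinely random judge kernel, the claim that its anchor average is \emph{exactly} $\gapbase$ requires defining $\gapbase$ as the kernel mean, because the paper's cached single-pass mean is exact only when re-scoring by the unchanged judge is treated as deterministic.
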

\begin{proof}[Proof idea]
A false \texttt{judge} verdict requires the anchor family to fire, whose probability \cref{prop:validity}
bounds by $\alphajudge$ irrespective of $\guard$; (b) is a union bound over the two families' separate
false-fire events. See \cref{app:proofs}.
\end{proof}
\begin{remark}
This is the disentanglement asymmetry: the anchor process is a \emph{one-way test} for judge identity.
Its converse direction --- catching judge drift quickly enough to win the race against a contaminated
main process --- is power, not validity, and is the subject of \cref{prop:race}.
\end{remark}

\begin{proposition}[The attribution race]\label{prop:race}
Under judge drift (with or without simultaneous system drift) the misattribution event
$\{\text{verdict} = \texttt{system}\}$ requires the contaminated main process to fire more than $\guard$
positions before the anchors:
\begin{equation*}
  \mathbb{P}(\text{verdict} = \texttt{system} \mid \text{judge drift}) \;=\; \mathbb{P}(\tau_{\mathrm{sys}} + \guard < \tau_{\mathrm{anc}}),
\end{equation*}
which is monotonically non-increasing in the guard width $\guard$, in the anchor interleave rate
$1/\rho$, and in the anchor set size $k$ (more anchor evidence per position yields stochastically
smaller $\tau_{\mathrm{anc}}$), and monotonically non-decreasing in the main process's sampling power
(more strong calls on a contaminated $\strong$ yield stochastically smaller $\tau_{\mathrm{sys}}$).
\end{proposition}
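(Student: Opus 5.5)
The identity follows directly from the attribution rule \cref{eq:attribution}, and each monotonicity claim comes from a coupling argument.

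\textbf{The identity.} By \cref{eq:attribution}, $\text{verdict}=\texttt{system}$ holds exactly when $\tau_{\mathrm{sys}}<\infty$ and either $\tau_{\mathrm{anc}}=\infty$ or $\tau_{\mathrm{anc}}>\tau_{\mathrm{sys}}+\guard$. With the convention $\infty>$ any finite number, this is the single event $\{\tau_{\mathrm{sys}}+\guard<\tau_{\mathrm{anc}}\}$. The event forces $\tau_{\mathrm{sys}}<\infty$, since $\infty<\tau_{\mathrm{anc}}$ is impossible. Conditioning on the judge-drift scenario then gives the displayed equality.

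\textbf{Monotonicity in $\guard$.} This is pathwise. For $\guard\le\guard'$ we have $\{\tau_{\mathrm{sys}}+\guard'<\tau_{\mathrm{anc}}\}\subseteq\{\tau_{\mathrm{sys}}+\guard<\tau_{\mathrm{anc}}\}$ on every sample path.

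\textbf{The remaining three claims.} For anchor rate, anchor set size, and main-process power, I reduce everything to a stochastic-ordering lemma. If $\tau_{\mathrm{anc}}$ is replaced by a stochastically smaller $\tau'_{\mathrm{anc}}$, coupled so that $\tau'_{\mathrm{anc}}\le\tau_{\mathrm{anc}}$ a.s. and the pair is conditionally independent of $\tau_{\mathrm{sys}}$, then $\mathbb{P}(\tau_{\mathrm{sys}}+\guard<\tau'_{\mathrm{anc}})\le\mathbb{P}(\tau_{\mathrm{sys}}+\guard<\tau_{\mathrm{anc}})$. Symmetrically, a stochastically smaller $\tau_{\mathrm{sys}}$ weakly increases the probability. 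To justify conditional independence I use the structure of the construction: the anchor family consumes only re-scorings of $\anchorset$, and the main family consumes only stream items. Once we condition on the judge's (drifted) scoring behaviour, the two stopping times are driven by disjoint randomness. This is the same separation that underlies \cref{prop:identification}, and it lets me integrate the pathwise inequality against the law of the other time.

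\textbf{The couplings.} For the interleave rate, fix the sequence of anchor observations $\gapstat_1,\gapstat_2,\dots$ and their wealth path. Under rate $1/\rho$ the $j$-th anchor observation arrives at position $j\rho$, so $\tau_{\mathrm{anc}}=\rho\, J$, where $J$ is the index of the first threshold crossing, and $J$ does not depend on $\rho$. Decreasing $\rho$ therefore decreases $\tau_{\mathrm{anc}}$ pathwise. For the main process's sampling power, I couple the sampling decisions via uniforms $U_i$, with $\pi_i$ larger meaning more items are strong-judged.

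\textbf{Main obstacle.} Pathwise monotonicity of a first-crossing time in the amount of evidence is not automatic for betting wealth: an extra observation could lower wealth. For both the anchor set size $k$ and sampling power I therefore do not expect a pathwise proof. My plan is to state and use the claim in the sense the statement itself parenthesises, namely to \emph{assume} (or verify under a drift of fixed sign) that more evidence per position yields a stochastically smaller crossing time. For the anchor set, larger $k$ reduces the sampling variance of the gap estimate under a fixed mean shift. I would try to show this via the growth rate of the log-wealth, $\mathbb{E}\log e_i$, increasing with the signal-to-noise ratio, giving stochastic dominance at least asymptotically. Given that ordering, the lemma above transfers it to the misattribution probability. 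So the full proof will be rigorous for $\guard$ and $\rho$, and will treat $k$ and main-process power under that explicitly stated stochastic-ordering hypothesis.
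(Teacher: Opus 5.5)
Your skeleton matches the paper's proof: the identity read off \cref{eq:attribution}, pathwise monotonicity in $\guard$, and stochastic dominance for the remaining knobs. The paper only asserts that dominance (``we do not claim a sharper coupling''). Your independence lemma, resting on the disjoint-randomness fact of \cref{prop:orthogonality}, sharpens the paper's ``more accumulated wealth per step reaches the threshold no later.'' So does your warning that an extra observation can lower betting wealth.

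The gap is in the one step where you claim more rigor than the paper: the interleave coupling. Writing $\tau_{\mathrm{anc}} = \rho J$ with $J$ free of $\rho$ requires the $j$-th anchor observation to have the same law for every $\rho$. That holds only if the judge is already drifted at the first anchor slot. In the paper's setting the judge changes at a change point $c>0$ (position $300$ in the experiments, against $\rho \le 20$). The rate-$1/\rho$ process therefore first absorbs roughly $c/\rho$ null observations, and whether the $j$-th observation is pre- or post-change depends on $j\rho$ versus $c$. Both the wealth accumulated by time $c$ and the predictable plug-in bet computed from past gaps carry this $\rho$-dependent pre-change history into the post-change phase. So $J$ depends on $\rho$, and $\tau_{\mathrm{anc}}$ is not pathwise monotone in $\rho$. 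You must either restrict the rigorous claim to drift in force from the first anchor slot, or place $1/\rho$ under the same stochastic-ordering hypothesis as $k$ and main power, which is effectively where the paper leaves it.

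Your planned justification of the $k$-ordering also targets a quantity that does not move. Each slot re-scores a single anchor drawn with replacement from $\anchorset$, and $\gapbase$ is the exact mean over $\anchorset$. Conditional on $\anchorset$, the gaps are therefore i.i.d.\ from the empirical gap distribution on $\anchorset$. This has three consequences:
\begin{itemize}
\item $k$ adds no evidence per position.
\item The per-observation variance does not shrink with $k$.
\item The realized mean shift $\delta_{\anchorset}$ over the anchor set has a $k$-free expectation; larger $k$ only concentrates $\delta_{\anchorset}$ around the population shift.
\end{itemize}
Hence $\mathbb{E}\log e_i$ has no systematic dependence on $k$, and the ordering can fail outright. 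For an item-heterogeneous judge change with near-zero population mean shift, the two-sided anchor family is more likely to fire with small $k$, because $|\delta_{\anchorset}|$ is typically larger for small anchor sets. The $k$ claim can only be carried as an explicit hypothesis, as you partly anticipate. It cannot be derived from a signal-to-noise argument.
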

\begin{proof}[Proof idea]
The event $\{\text{verdict} = \texttt{system}\}$ under judge drift is exactly
$\{\tau_{\mathrm{sys}} + \guard < \tau_{\mathrm{anc}}\}$ by \cref{eq:attribution}; the monotonicities
follow from stochastic ordering of the stopping times in each knob. See \cref{app:proofs}.
\end{proof}
\begin{remark}
The guard window costs attribution latency but never validity: raising $\guard$ delays \texttt{system}
attribution yet cannot manufacture false \texttt{judge} verdicts beyond the $\alphajudge$ of
\cref{prop:identification}(a), precisely because the anchor bound holds at \emph{any} horizon. The race
also leaves detection untouched: the total event $\{\text{drift flagged at all}\} =
\{\tau_{\mathrm{sys}} \wedge \tau_{\mathrm{anc}} < \infty\}$ does not depend on $\guard$; the rule only
decides the label.
\end{remark}

\begin{proposition}[Orthogonality]\label{prop:orthogonality}
$\tau_{\mathrm{anc}}$ is independent of the main-process configuration (policy, budget, $\pi$), and the
main process's distribution is independent of the anchor interleave: anchor re-judgements are
out-of-band calls on held-out items that never enter the main e-processes or their calibration. The
design therefore splits cleanly --- the anchor budget $(k, \rho)$ and guard $\guard$ govern judge-drift
latency and the race, the main configuration governs system power, and $\alphajudge, \alphasys$ are
separate Bonferroni budgets.
\end{proposition}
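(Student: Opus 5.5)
The proof is a structural, data-flow argument. I will write the whole monitor as a measurable function of independent sources of randomness and then read off the independence claims. Concretely, I fix the joint probability space generated by four components:
\begin{itemize}
\item the main item stream (items, strata, cheap scores $\cheap_i$, and the strong scores $\strong_i$ that would be returned if queried);
\item the main sampling randomness (the uniforms used to realize $\pi_i$, plus any bandit or allocation randomness);
\item the anchor draws (the uniform index into $\anchorset$ at each position $i \equiv 0 \pmod{\rho}$);
\item the judge's responses on anchor items.
\end{itemize}
The key modeling assumption, which I would state explicitly, is that the judge's response to an anchor query at position $i$ depends only on the anchor item, the judge version in force at position $i$, and fresh internal randomness. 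It does not depend on which main items were queried, on how many strong calls were made, or on any wealth. In other words, out-of-band calls are conditionally independent of the main-process history given the (deterministic, exogenous) judge-change schedule.

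The first half of the proposition, that $\tau_{\mathrm{anc}}$ is independent of the main configuration, follows from inspecting the definitions in \cref{eq:gap} and \cref{eq:anchor-evalue}. Each $\gapstat_i$ is a function of $\strong_i(a)$, $\human(a)$ and the anchor draw. The baseline $\gapbase$ is computed once from cached time-0 scores on $\anchorset$, and $\anchorset$ is excluded from main calibration. Each predictable bet $\lambda_i$ is a function of $\gapstat_1,\dots,\gapstat_{i-1}$ only. By induction on $n$, the wealth $\wealth_n^{(r,d)}$, and hence the event $\{\tau_{\mathrm{anc}} \le n\}$, is measurable with respect to the $\sigma$-field $\mathcal{G}$ generated by the anchor draws and the anchor responses. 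The policy, $\tau$, $\pi_0$ and the escalation rule \cref{eq:escalation} enter none of these maps. The law of $\tau_{\mathrm{anc}}$ is therefore the same function of $(k,\rho,\alphajudge)$ and the judge schedule for every main configuration. This gives the invariance in distribution; under the assumption above it also gives actual independence from the main-process $\sigma$-field.

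The converse direction runs symmetrically. The main e-processes use the cell bars $q_0[r,t]$, which are calibrated on data disjoint from $\anchorset$. They also use $\cheap_i$, $\strong_i$ on main items, and $\pi_i$, which by \cref{eq:escalation} is a function of past main wealth only. Inducting on $i$, every main wealth and every $\pi_i$ is measurable with respect to the $\sigma$-field $\mathcal{F}$ of the main stream and main sampling randomness. Consequently $\tau_{\mathrm{sys}}$ is $\mathcal{F}$-measurable, and its law does not involve $\rho$ or $k$. One indexing subtlety needs fixing here: an interleaved anchor call must not consume a main-stream position, or alternatively positions are counted in main items. I would adopt the convention that anchor calls are inserted between main items, so that $\mathcal{F}$ is literally unaffected. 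With $\mathcal{F} \perp \mathcal{G}$, the two stopping times are independent. The separate Bonferroni budgets then follow by applying \cref{prop:validity} to the anchor family and \citet[Thm.~2.1]{csillag2025ppe} to the main family, each in its own $\sigma$-field.

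The main obstacle is justifying $\mathcal{F} \perp \mathcal{G}$ when the same judge serves both channels. A judge change moves both channels simultaneously, and a stateful API (caching, batching, rate-limit-dependent behavior) could couple them. I would handle this by conditioning on the exogenous judge-version path. Given that path, I would assume the per-call randomness is independent across calls, so the coupling is only through a common deterministic schedule. Orthogonality is then stated conditionally on the drift scenario, which is all the design split needs: $(k,\rho,\guard)$ govern the race through $\tau_{\mathrm{anc}}$, and the main configuration governs $\tau_{\mathrm{sys}}$, as in \cref{prop:race}.
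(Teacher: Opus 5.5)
Your proposal is correct and takes essentially the same route as the paper's proof. Both arguments trace data flow: the anchor statistics depend only on $(\anchorset, \human)$, the current judge, and the anchor draws; the main e-processes never read anchor calls and never use $\anchorset$ in calibration; so, given the judge state, the two stopping times are functions of disjoint randomness. Your version also states explicitly several modeling assumptions the paper leaves implicit: an exogenous judge-change schedule with fresh per-call judge randomness, anchor calls that do not consume stream positions, and the randomness of the uniform anchor draw, which the paper passes over when it calls $\gapstat_i$ a ``deterministic function''.
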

\begin{proof}[Proof idea]
The anchor observations are a deterministic function of $(\anchorset, \human, \text{current judge})$
and the interleave schedule, none of which depend on the main acquisition rule; symmetrically, held-out
anchor calls never enter the main statistics. See \cref{app:proofs}.
\end{proof}
\begin{remark}
Orthogonality holds for the two \emph{processes}; the \emph{verdict} still couples them through the race
of \cref{prop:race}.
\end{remark}

We verify these guarantees empirically in \cref{sec:experiments}: \cref{prop:validity} and
\cref{prop:identification} as false-\texttt{judge} rates at or below $\alphajudge$ under pure system
drift and no drift; \cref{prop:race} through the guard-window sweep and the cost frontier; and
\cref{prop:orthogonality} through the constant judge-latency rows and constant system-power columns of
that frontier. Two boundaries are worth stating plainly. The construction claims no optimality --- the
plug-in $\lambda$ approximates the log-optimal bet, and the attribution rule is not claimed to minimize
misattribution at a fixed budget --- and it assumes the human labels $\human$ remain a valid reference:
if the anchor items' true quality standard itself drifts conceptually, the anchor process correctly
reports judge-vs-anchor disagreement, but its reading as judge drift weakens until a periodic anchor
refresh re-establishes the baseline.

\section{Experiments}\label{sec:experiments}


We evaluate four claims empirically: the attribution rule disentangles judge drift from
system drift (\cref{prop:identification}); the misattribution rate is a tunable race in the
guard width, anchor rate, and anchor size (\cref{prop:race}); the two processes are
orthogonal and must be co-provisioned (\cref{prop:orthogonality}); and the anytime-valid
anchor process dominates the classical alternatives operationally. Every experiment is run
on two datasets --- HelpSteer2 assistant-response evals and TL;DR summarization evals ---
with \emph{nothing} re-tuned between them: the same judge roster, featured configuration,
thresholds, and $\alpha$ budgets. The second dataset thereby functions as an out-of-sample
test of the propositions, not only a robustness check: invariances must transfer, and
cross-domain differences must land where the theory says (\cref{sec:exp-generalize}). We
close by showing every property also generalizes across all topic--rubric cells of both
datasets.

\subsection{Setup}\label{sec:exp-setup}

We use two datasets. The primary is HelpSteer2 \citep{helpsteer2}: $1498$ items, each scored
on $R = 5$ rubrics (helpfulness, correctness, coherence, complexity, verbosity) and carrying
a topic label over $K = 6$ strata. The second is the TL;DR summarization axis evals
\citep{stiennon2020tldr}: $1498$ summarization items scored on $R = 4$ axes (overall,
accuracy, coverage, coherence) with subreddit strata ($K = 5$). On both, the cheap judge
$\cheap$ is \texttt{gemini-3.1-flash-lite} and the strong judge $\strong$ is
\texttt{gemini-3.1-pro-preview} \citep{gemini}; the frozen anchor labels $\human$ are each
dataset's human ratings (HelpSteer2 per-attribute $0$--$4$, TL;DR $7$-point Likert, both
rescaled to $[0,1]$ --- the anchor baseline gap absorbs the constant scale offset). We
monitor streams of length $N = 1200$ with the change point at position $300$, averaging over
$60$ seeds, and set $\alphasys = \alphajudge = 0.1$.

We inject three judge drifts spanning the realism axis. \emph{Real-lenient}: every item is
re-judged by \texttt{gemini-3.5-flash}, an actual silent version bump, which runs uniformly
more lenient than the baseline judge (per-rubric mean gap shift $+0.022$ to $+0.072$ on
HelpSteer2, $+0.031$ to $+0.071$ on TL;DR --- the same bump, the same signature, in both
domains). \emph{Real-harsh}: the \emph{same} model is re-prompted with a stricter scoring
prompt (``v2-strict''), a real judge-behavior change that shifts scores down by $-0.043$ to
$-0.152$ on HelpSteer2 (verbosity the lone exception at $+0.025$) and by $-0.097$ to $-0.230$
on TL;DR (summaries offer more flaws to penalize). \emph{Synthetic}: a tunable shift $-0.25$
plus noise. System drift is injected by drawing the target cell (creative$\times$helpfulness
on HelpSteer2, relationships$\times$overall on TL;DR) post-change from a regressed-blind pool
($\strong$ low, $\cheap$ high), the exact blind spot that makes the cheap judge alone
insufficient. Crossing the two drifts gives four ground-truth conditions
$\{\texttt{none}, \texttt{system}, \texttt{judge}, \texttt{both}\}$.

Unless a knob is swept, we report a single \emph{featured configuration}: a fixed-budget main
process at sampling rate $0.3$ (the high-power point on the cost frontier of \cref{sec:setting})
with $k = 200$ anchors interleaved at rate $1/5$ and guard $\guard = 300$ (justified in
\cref{sec:exp-coprovision}). This configuration runs at cost-fraction $\approx 0.64$
(HelpSteer2) / $0.63$--$0.65$ (TL;DR) of strong-evaluating every item (the $0.125$ cheap
floor, $0.30$ main sampling, and $1/5$ anchor share), versus $1.0$ for full evaluation; a
cheap-but-deaf alternative (escalation $\pi_0 = 0.02$ with rate-$20$ anchors) costs $0.21$
but misses most system regressions.

\subsection{Disentanglement and detection}\label{sec:exp-disentangle}

\cref{tab:confusion} reports the verdict confusion matrices under the synthetic harsh shift
on both datasets. With matched anchors, system detection is $41/60$ (HelpSteer2) and $56/60$
(TL;DR) --- exactly each fixed@$0.3$ main's power from \cref{sec:setting} --- while judge and
both are detected $60/60$ and $55$--$59/60$. On HelpSteer2 the pure \texttt{judge} and
\texttt{both} rows \emph{never} land in \texttt{system} (judge$\to$system $0/120$); on TL;DR
a small spill remains at the default guard ($5{+}1/120$) --- the guard race of
\cref{sec:exp-harsh}, where widening $\guard$ closes it. False \texttt{judge} verdicts stay
at or below $\alphajudge$ on both datasets: $2/60$ on \texttt{none} and $2/60$ on
\texttt{system} (rate $\approx 0.03 < 0.1$, as expected --- the rate-$5$ interleave gives the
anchor family four times the observations, so a few $\alpha$-budget false fires appear by
design).

The right half of \cref{tab:confusion} is the co-provisioning lesson. With
under-provisioned anchors ($k = 50$, rate $1/20$), the high-power main reads the contaminated
$\strong$ so fast that it fires roughly $300$ items before the anchors catch up:
$48$--$50/60$ (HelpSteer2) and $54$--$58/60$ (TL;DR) of the genuine judge-drift runs land in
\texttt{system} and the disentanglement collapses. The anchor process must be provisioned to
\emph{out-run the main process it guards} --- anchor detection latency ($120$/$140$ items for
the matched anchors, versus $420$/$440$ for the slow ones) has to come in under the main's
contaminated-fire latency plus $\guard$.

\begin{table}[t]
\centering\small
\caption{Verdict confusion (60 seeds, synthetic judge shift $-0.25$, fixed@0.3
main, guard $\guard{=}50$), both datasets. Left: matched anchors ($k{=}200$, rate
$1/5$). Right: under-provisioned anchors ($k{=}50$, rate $1/20$) --- the
contaminated main out-runs the anchors and the disentanglement collapses on both
datasets.}
\label{tab:confusion}
\begin{tabular}{l rrr c rrr}
\toprule
& \multicolumn{3}{c}{$k{=}200$, $1/5$ (matched)} & &
  \multicolumn{3}{c}{$k{=}50$, $1/20$} \\
\cmidrule{2-4}\cmidrule{6-8}
true $\backslash$ pred & none & system & judge & & none & system & judge \\
\midrule
\multicolumn{8}{l}{\emph{HelpSteer2} (median judge latency $120$ / $420$)} \\
none   & 58 & 0  & 2  & & 60 & 0  & 0  \\
system & 17 & 41 & 2  & & 20 & 40 & 0  \\
judge  & 0  & 0  & 60 & & 0  & 48 & 12 \\
both   & 0  & 0  & 60 & & 0  & 50 & 10 \\
\midrule
\multicolumn{8}{l}{\emph{TL;DR} (median judge latency $140$ / $440$)} \\
none   & 58 & 0  & 2  & & 60 & 0  & 0  \\
system & 2  & 56 & 2  & & 2  & 58 & 0  \\
judge  & 0  & 5  & 55 & & 0  & 58 & 2  \\
both   & 0  & 1  & 59 & & 0  & 54 & 6  \\
\bottomrule
\end{tabular}
\end{table}

\cref{tab:realbump} turns to a \emph{real} silent version bump and the anchor-budget
frontier. The lenient \texttt{gemini-3.5-flash} bump is moderate (mean shift $\leq +0.072$,
magnitude comparable to the synthetic $-0.05$/$-0.10$ sweep points), so detection depends on
anchor budget --- on both datasets: $7\%$ at $k = 50$/rate-$20$ on each, rising to $100\%$
($60/60$) on HelpSteer2 and $83\%$ ($50/60$) on TL;DR at the featured $k = 200$/rate-$5$
(the bump is subtler per-axis on TL;DR's featured cell, so the same budget buys less of the
frontier; misses fail safe to \texttt{none}). The pure-\texttt{judge} row never lands in
\texttt{system} at any budget on either dataset, and false-judge rates stay under
$\alphajudge$. The featured fixed@$0.3$ replications hold: judge detection $60/60$ at latency
$498$ (HelpSteer2) and $50/60$ at latency $555$ (TL;DR), judge$\to$system $0$.

\begin{table}[t]
\centering\small
\caption{A real version bump (gemini-3.1-pro $\to$ 3.5-flash) vs anchor budget, both
datasets. 60 seeds; escalation main (featured fixed@0.3 replications: 60/60 at latency
498 on HelpSteer2, 50/60 at latency 555 on TL;DR, judge$\to$system 0 on both).}
\label{tab:realbump}
\begin{tabular}{llcccc}
\toprule
anchors & rate & detection (judge / both) & judge$\to$sys & none$\to$judge & latency \\
\midrule
\multicolumn{6}{l}{\emph{HelpSteer2} (mean shift $\leq{+}0.072$)} \\
$k{=}50$  & $1/20$ & 7\% (4/60) / 7\%        & 0 & 0/60 & 660 \\
$k{=}150$ & $1/8$  & 62\% (37/60) / 60\%     & 0 & 1/60 & 652 \\
$k{=}200$ & $1/5$  & \textbf{100\% (60/60)} / 98\% & 0 & 2/60 & 498 \\
\midrule
\multicolumn{6}{l}{\emph{TL;DR} (mean shift $\leq{+}0.071$)} \\
$k{=}50$  & $1/20$ & 7\% (4/60) / 7\%        & 0 & 0/60 & 810 \\
$k{=}150$ & $1/8$  & 38\% (23/60) / 38\%     & 0 & 2/60 & 516 \\
$k{=}200$ & $1/5$  & \textbf{83\% (50/60)} / 83\% & 0 & 2/60 & 555 \\
\bottomrule
\end{tabular}
\end{table}

\paragraph{The one-sided caveat.} The lenient bump scores $\strong' \geq \strong$ on every
rubric, and the main monitor is one-sided ``below'' --- it fires only on quality
\emph{drops}. A lenient judge therefore cannot push the main toward a \texttt{system} alarm at
all, so the judge$\to$system $= 0$ of \cref{tab:realbump} is \emph{partly structural} for this
drift direction, not solely the anchors' doing. \cref{tab:realbump} thus demonstrates
\emph{detection} of a subtle real bump; the genuine test of disentanglement-under-contamination
is a \emph{harsher} judge change that does push the main downward, which is the subject of
\cref{sec:exp-harsh}. Both demonstrations are needed; neither alone supports the full claim.

\subsection{A real harsh judge change: contamination and the guard race}\label{sec:exp-harsh}

The v2-strict re-judge is a real LLM judge-behavior change: the same model is instructed to
``reserve $4$ for flawless work \dots\ when torn, choose the lower.'' \cref{tab:judgegap}
characterizes both real drifts per rubric on both datasets --- the lenient bump is a uniform
leniency shift ($\leq +0.072$) with the \emph{same} $+0.03$ to $+0.07$ signature in both
domains (a property of the version bump, not the dataset), while the harsh re-judge shifts
strictly downward: into the $-0.10$ to $-0.15$ band on HelpSteer2 (verbosity the lone,
near-noise exception at $+0.025$) and deeper, $-0.10$ to $-0.23$, on TL;DR. This is the
synthetic sweep's hardest regime: slow anchors against a fast contaminated main. Note that
item-level agreement degrades far more than the means suggest (exact agreement
$26$--$84\%$), and the rubrics where the two strong judges agree least (verbosity,
complexity, corr $0.46$--$0.60$) mirror the cheap judge's own blind spots --- the anchor
process fires on the systematic component, not the per-item noise, which is why it needs
$k \approx 200$ anchors for the subtle bump.

\begin{table}[t]
\centering\small
\caption{Strong-judge gap $\strong' - \strong$ per rubric, both real drifts, both datasets
($\approx 1496$--$1498$ items judged by both judges). \emph{shift} is the mean gap shift the
anchor process monitors; \emph{corr} and \emph{agree\%} are item-level. The lenient bump
moves scores up uniformly with the same signature in both domains; the harsh re-judge moves
them down --- harder on TL;DR, where summaries offer more flaws to penalize.}
\label{tab:judgegap}
\begin{tabular}{l ccc c ccc}
\toprule
& \multicolumn{3}{c}{lenient (3.5-flash)} & & \multicolumn{3}{c}{harsh (v2-strict)} \\
\cmidrule{2-4}\cmidrule{6-8}
rubric & shift & corr & agree\% & & shift & corr & agree\% \\
\midrule
\multicolumn{8}{l}{\emph{HelpSteer2}} \\
helpfulness & $+0.064$ & 0.88 & 63.6\% & & $-0.152$ & 0.90 & 45.0\% \\
correctness & $-0.008$ & 0.86 & 70.0\% & & $-0.120$ & 0.88 & 55.1\% \\
coherence   & $+0.022$ & 0.76 & 82.5\% & & $-0.092$ & 0.79 & 62.9\% \\
complexity  & $+0.050$ & 0.59 & 72.5\% & & $-0.043$ & 0.57 & 77.5\% \\
verbosity   & $+0.072$ & 0.46 & 62.6\% & & $+0.025$ & 0.60 & 71.0\% \\
\midrule
\multicolumn{8}{l}{\emph{TL;DR}} \\
overall     & $+0.048$ & 0.89 & 69.7\% & & $-0.230$ & 0.84 & 26.2\% \\
accuracy    & $+0.031$ & 0.91 & 84.4\% & & $-0.097$ & 0.89 & 65.7\% \\
coverage    & $+0.071$ & 0.87 & 65.6\% & & $-0.193$ & 0.83 & 31.9\% \\
coherence   & $+0.037$ & 0.81 & 78.6\% & & $-0.134$ & 0.78 & 51.0\% \\
\bottomrule
\end{tabular}
\end{table}

\cref{tab:guard} reports the featured config under this real contamination, sweeping the guard
window $\guard$ on both datasets. The harsh shift genuinely contaminates the one-sided main,
but the anchors flip the large majority of contaminated fires to the correct \texttt{judge}
verdict, and the residual spill is a tunable race, not a defect: judge$\to$system shrinks
monotonically in $\guard$ --- $29 \to 18 \to 10$ of $120$ on HelpSteer2 and $5 \to 1 \to 0$ on
TL;DR --- exactly the monotonicity of \cref{prop:race}, because spills happen only when the
contaminated main fires more than $\guard$ items before the anchors. TL;DR reaches a
\emph{perfect} $240/240$ over its two drift rows at $\guard = 300$: its harsh shift is larger
(overall $-0.23$), so the anchors are faster (median latency $205$ versus $370$) and win the
race outright --- a bigger judge change is \emph{easier} to attribute. Critically, the
\texttt{none} and \texttt{system} rows are identical at every $\guard$ on both datasets
($58/0/2$ and $17/41/2$; $58/0/2$ and $2/56/2$): widening the guard costs nothing in system
detection or false-judge rate --- only attribution latency and a sliver of anchor cost. The
guard window is the third knob, after anchor size and interleave rate, and the cheapest: it
buys correctness with latency rather than dollars.

\begin{table}[t]
\centering\small
\caption{Real harsh judge change (v2-strict re-judge), featured config (fixed@0.3 main,
$k{=}200$/rate-$5$ anchors), 60 seeds, sweeping guard $\guard$, both datasets. Spill =
judge$\to$system total over the two drift rows (of 120). The \texttt{none} and \texttt{system}
rows are identical at every $\guard$ on both datasets: guard width costs only attribution
latency. TL;DR's larger shift makes the anchors faster, closing the spill entirely at
$\guard{=}300$.}
\label{tab:guard}
\begin{tabular}{c ccc cc}
\toprule
guard $\guard$ & judge row (none/sys/judge) & both row (none/sys/judge) & spill /120 & latency & cost \\
\midrule
\multicolumn{6}{l}{\emph{HelpSteer2} (shifts $-0.04$ to $-0.15$)} \\
50  & 0 / 10 / 50 & 0 / 19 / 41 & 29 & 370 & 0.64 \\
150 & 0 / 8 / 52  & 0 / 10 / 50 & 18 & 370 & 0.66--0.67 \\
300 & 0 / \textbf{3} / \textbf{57} & 0 / \textbf{7} / \textbf{53} & \textbf{10} & 370 & 0.68--0.71 \\
\midrule
\multicolumn{6}{l}{\emph{TL;DR} (shifts $-0.10$ to $-0.23$)} \\
50  & 0 / 3 / 57 & 0 / 2 / 58 & 5 & 205 & 0.63--0.65 \\
150 & 0 / 1 / 59 & 0 / 0 / 60 & 1 & 205 & 0.63--0.68 \\
300 & 0 / \textbf{0} / \textbf{60} & 0 / \textbf{0} / \textbf{60} & \textbf{0} & 205 & 0.63--0.73 \\
\bottomrule
\end{tabular}
\end{table}

\cref{fig:race} shows one seed of the featured config under this real harsh drift on each
dataset. The change point is at $300$. On HelpSteer2 the anchor process crosses its threshold
($\approx 4.61$) at position $735$, well before the contaminated main crosses its threshold
($\approx 5.70$) at $1098$; on TL;DR the larger shift accelerates \emph{both} processes ---
anchors at $370$, contaminated main at $567$ --- but the anchors keep winning. In both panels
the verdict is \texttt{judge}: the contaminated main fire lands inside the guard window
(shaded) opened by the anchor alarm.

\begin{figure}[t]
\centering
\includegraphics[width=\textwidth]{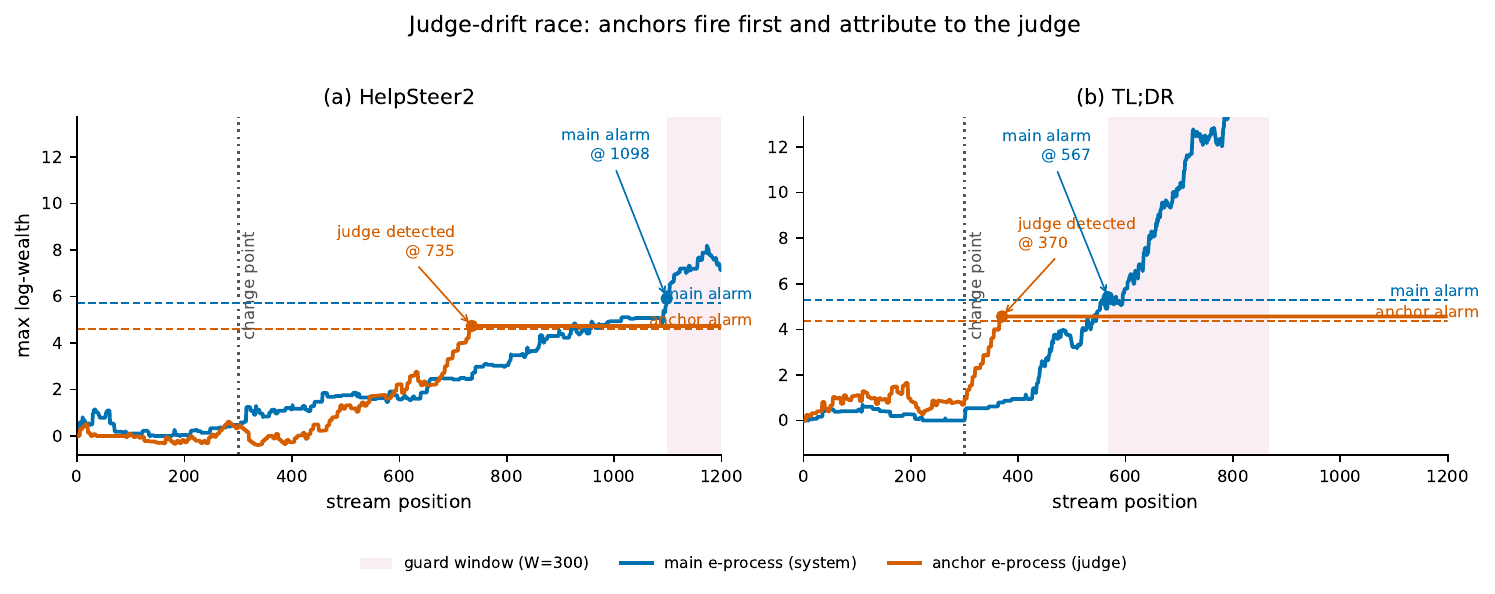}
\caption{The attribution race under real harsh judge drift (one seed, featured config), on
HelpSteer2 (left) and TL;DR (right). Both e-process wealth trajectories are shown against
their Bonferroni alarm thresholds; the change point is at $300$. The anchor process alarms
first on both datasets ($735$ vs $1098$ on HelpSteer2; $370$ vs $567$ on TL;DR), so the main
fire falls inside the guard window (shaded) and the verdict is correctly \texttt{judge}.}
\label{fig:race}
\end{figure}

\subsection{Co-provisioning: main power $\times$ anchor rate}\label{sec:exp-coprovision}

\cref{fig:frontier} sweeps five main configurations against three anchor interleave rates
(real harsh drift, $k = 200$, $\guard = 50$, $30$ seeds) on both datasets, establishing four
structural facts. First, the two processes are \emph{orthogonal} as designed
(\cref{prop:orthogonality}): judge latency depends only on the anchor rate ($830/695/430$
items at rate $20/10/5$ on HelpSteer2; $640/330/202$ on TL;DR --- across every main config),
system power depends only on the main config (constant down each main's rows), and the
false-judge rate is flat at $\leq 0.07 \leq \alphajudge$. Second, at under-anchored points
the sum of judge-detection and spill is $\approx 1$: those points do not \emph{miss} the
drift, they \emph{mislabel} it --- detection and attribution are separate. Third, the spill
grows monotonically with main power at fixed rate --- on HelpSteer2's rate-$5$ column,
$0.03 \to 0.07 \to 0.18 \to 0.33$ (escalation $\pi_0 = 0.02$, fixed@$0.1$, fixed@$0.2$,
fixed@$0.3$) --- so the feasible region is a diagonal band and the anchor budget must be
co-provisioned with main power, not bolted on. The TL;DR panel shows the same geometry
entered from the fast-anchor side: its larger harsh shift lets rate-$5$ anchors out-run even
the fixed@$0.3$ main (spill $\leq 0.05$), and the band sits at rates $10$--$20$ instead
($0.42$ at fixed@$0.3$/rate-$10$, $0.90$ at rate-$20$) --- the race of \cref{prop:race}, with
the drift magnitude setting where the band falls. Fourth, the guard $\guard$ is the
orthogonal escape hatch: the sweep of \cref{sec:exp-harsh} pulled the HelpSteer2 corner's
spill from $29$ to $10$ of $120$ at $\guard = 300$ (and TL;DR's from $5$ to $0$), so points
outside the band are recoverable with attribution latency instead of anchor dollars. Together
these justify the featured configuration: a high-power main matched with rate-$5$ anchors and
a wide guard.

\begin{figure}[t]
\centering
\includegraphics[width=\textwidth]{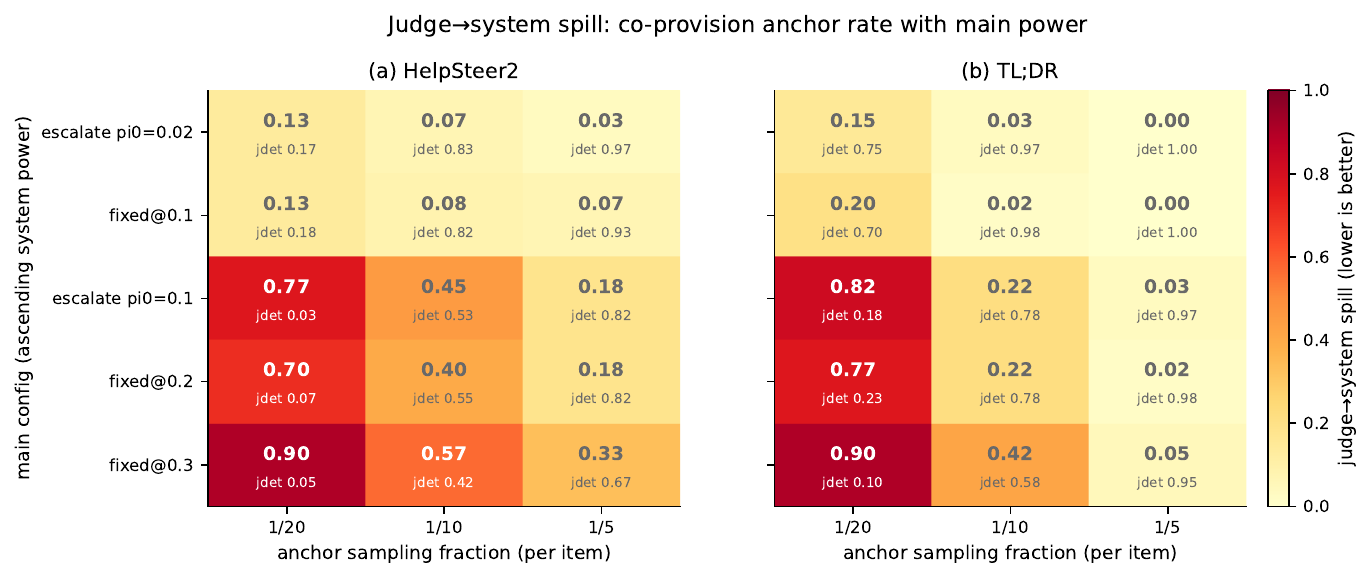}
\caption{Co-provisioning frontier (real harsh drift, $k{=}200$, $\guard{=}50$, 30 seeds), on
HelpSteer2 (left) and TL;DR (right). Judge latency depends only on the anchor rate and system
power only on the main config (\cref{prop:orthogonality}); the misattribution spill grows
along the main-power axis at fixed anchor rate, so the feasible region is a diagonal band ---
main power and anchor rate must be raised together. TL;DR's larger harsh shift shifts the
band toward slower anchor rates: the same geometry, set by the drift magnitude.}
\label{fig:frontier}
\end{figure}

\subsection{Baselines: why anytime-validity matters operationally}\label{sec:exp-baselines}

We compare the anchor e-process against two classical change detectors consuming the
\emph{identical} anchor observation stream ($k = 200$, rate $1/5$, $60$ seeds, both
datasets); \cref{tab:baselines} reports false-alarm rate, detection rate, and latency on each
drift. The default practice fails outright. \emph{naive-z} --- a rolling-window $z$-test
against baseline at $\alpha = 0.05$, re-tested every observation with no sequential
correction, the industry standard --- false-alarms on $75\%$ (HelpSteer2) and $67\%$ (TL;DR)
of no-drift streams, and $16/60$ and $10/60$ of its ``detections'' fire \emph{before the
change even happens}: its apparent speed is the absence of error control, not aggression.
\emph{ph-calib} is Page--Hinkley \citep{page1954} per rubric$\times$direction, with the
family threshold calibrated on $60$ held-out $H_0$ anchor streams to FWER $0.1$ --- the
strongest classical recipe. It holds its false-alarm budget ($0.08$/$0.03$) but is blind to
the subtle real lenient bump ($8\%$ on HelpSteer2, $40\%$ on TL;DR), and where it does detect
it is slower than the e-process ($295$ versus $120$ on the synthetic shift; $248$ versus
$205$ and $165$ versus $140$ on TL;DR). The anchor e-process dominates on both datasets:
false-alarm $0.03 \leq \alphajudge$ with no calibration data and a horizon-free guarantee,
$97\%$ detection on all three HelpSteer2 drifts and $83$--$100\%$ on TL;DR (the $83\%$ on
the lenient bump \emph{is} the anchor-budget frontier of \cref{tab:realbump}, not a method
limitation), and the best latency at every drift among false-alarm-controlled methods.
Anytime-validity is not a theoretical nicety: the default practice is unusable for continuous
monitoring, and the properly-calibrated classical alternative pays with both blindness to
subtle real drift and a calibration-data tax.

\begin{figure}[t]
\centering
\includegraphics[width=\textwidth]{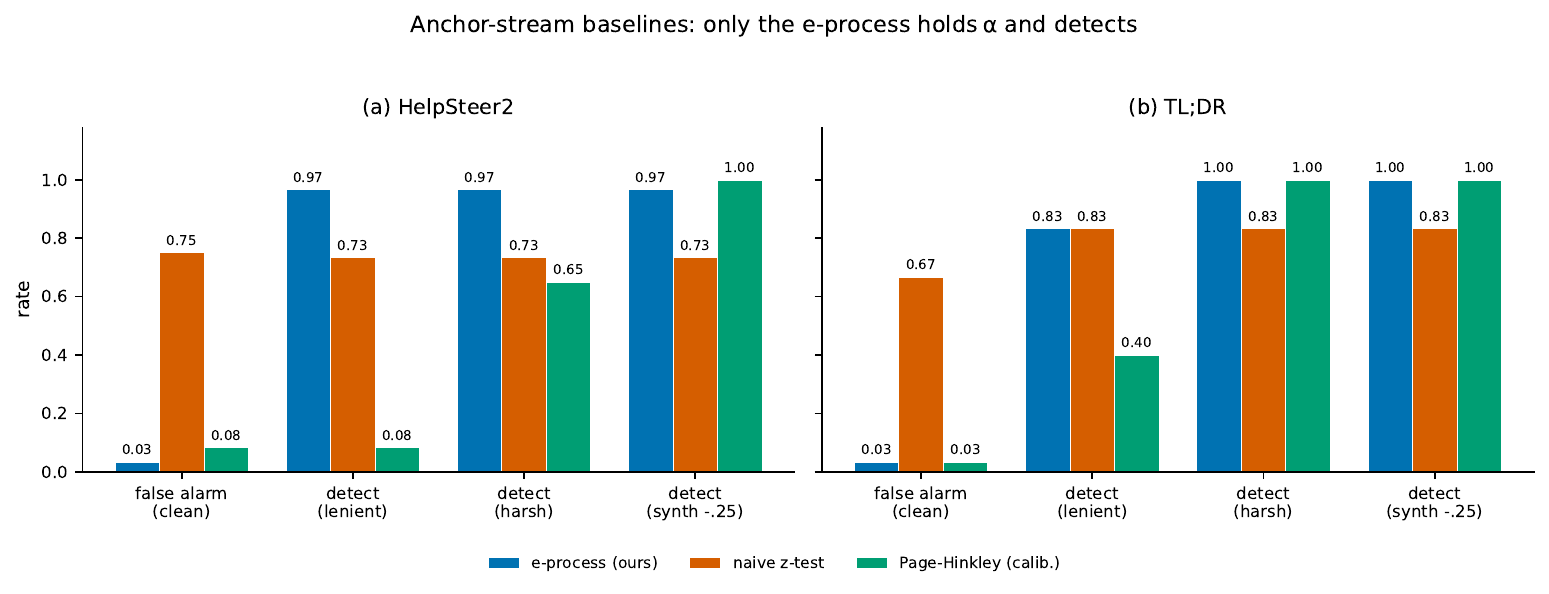}
\caption{The anchor e-process versus classical change detectors on the identical anchor
stream, on HelpSteer2 (left) and TL;DR (right). naive-z (rolling $z$-test, the industry
default) has no sequential error control and false-alarms on $75\%$/$67\%$ of no-drift
streams; calibrated Page--Hinkley holds $\alpha$ but misses the subtle real lenient bump; the
anytime-valid e-process controls false alarms at $\alphajudge$ with no calibration data and
detects all drifts up to its anchor budget.}
\label{fig:baselines}
\end{figure}

\begin{table}[t]
\centering\small
\caption{Anchor-process baselines on the identical anchor stream (60 seeds), both datasets.
FA = false-alarm rate on no-drift; detection rate / median latency on each of the three judge
drifts. naive-z additionally fires $16/60$ (HelpSteer2) and $10/60$ (TL;DR) alarms
\emph{before} the change point.}
\label{tab:baselines}
\begin{tabular}{l c ccc ccc}
\toprule
& & \multicolumn{3}{c}{detection rate} & \multicolumn{3}{c}{median latency} \\
\cmidrule{3-5}\cmidrule{6-8}
method & FA & lenient & harsh & synth & lenient & harsh & synth \\
\midrule
\multicolumn{8}{l}{\emph{HelpSteer2}} \\
eprocess  & 0.03 & 0.97 & 0.97 & 0.97 & 498 & 370 & 120 \\
naive-z   & 0.75 & 0.73 & 0.73 & 0.73 & 130 & 138 & 40  \\
ph-calib  & 0.08 & 0.08 & 0.65 & 1.00 & 555 & 530 & 295 \\
\midrule
\multicolumn{8}{l}{\emph{TL;DR}} \\
eprocess  & 0.03 & 0.83 & 1.00 & 1.00 & 555 & 205 & 140 \\
naive-z   & 0.67 & 0.83 & 0.83 & 0.83 & 168 & 72  & 50  \\
ph-calib  & 0.03 & 0.40 & 1.00 & 1.00 & 550 & 248 & 165 \\
\bottomrule
\end{tabular}
\end{table}

\subsection{Generalization: every cell of both datasets}\label{sec:exp-generalize}

\paragraph{Every topic--rubric cell.} Running the featured config across all $28$ runnable
HelpSteer2 topic$\times$rubric cells (two near-ceiling coherence cells are skipped --- their
regressed-blind pools are empty), judge detection is $1.00$ in $27/28$ cells ($0.97$ in
factual\_qa:coherence) at latency $98$--$128$ items, judge$\to$system is $0.00$ in $27/28$
($0.03$ in that same cell), and the false-judge rate is at most $0.10$. On TL;DR's $19/20$
runnable subreddit$\times$axis cells the picture is the same: judge detection $0.85$--$0.97$
at latency $118$--$145$ in every cell, judge$\to$system $\leq 0.15$ at $\guard = 50$, and
false-judge $\leq 0.10 = \alphajudge$. The C2 properties are cell-independent by construction
--- the anchor process never looks at the main-process cell. Only C1 system power varies
(range $0.00$--$1.00$ across HelpSteer2 cells, $0.40$--$1.00$ on TL;DR, weakest where
$\cheap$ is most generous; missed system regressions fail safe to \texttt{none}, never a
false \texttt{judge}). The full per-cell table is in \cref{app:cells}.

\paragraph{Cross-dataset reads.} The exhibit-by-exhibit replication tests the theory in two
distinct ways: the \emph{invariances} must transfer, and the \emph{differences} must be the
ones the propositions predict. Both happen. On the invariance side, the same real version
bump shows the same $+0.03$ to $+0.07$ per-rubric signature in both domains
(\cref{tab:judgegap}) --- a property of the bump, not the dataset --- turning the lenient
case from a case study into a property of the version bump itself; and each dataset has one
low-resolution rubric (verbosity on HelpSteer2, coherence on TL;DR) with identical
phenomenology --- weak $\cheap$ correlation, near-ceiling humans, weakest judge--human
agreement --- a property of rubric design, not of the monitor. On the prediction side, the
domains differ in exactly one relevant input --- the strict-prompt shift is larger on
summaries --- and \cref{prop:race} says a larger gap shift makes the anchors stochastically
faster, so attribution should improve everywhere that race appears. It does, three times
over, with nothing re-tuned: anchor latency drops ($205$ versus $370$), the guard sweep
closes to zero spill at $\guard = 300$ (\cref{tab:guard}), and the co-provisioning band moves
toward slower anchor rates (\cref{fig:frontier}). These are out-of-sample structural
predictions coming true, not robustness checks. The one quantity that is \emph{not}
predicted to transfer, C1 system power, indeed varies for its own reason: it is
\emph{higher} on TL;DR ($56/60$ versus $41/60$, \cref{tab:confusion}) because all four axes
are informative for $\cheap$, unlike HelpSteer2's complexity and verbosity.

\section{Discussion and Limitations}\label{sec:discussion}


The construction buys a sharp guarantee --- judge drift is never silently charged to the
system --- at the price of several scoped assumptions, which we state plainly.

\paragraph{Anchor staleness.} Identification (\cref{prop:identification}) assumes the
frozen human labels $\human$ remain a valid reference. If the anchor items' \emph{true}
quality standard drifts conceptually --- the meaning of a rubric shifts, say --- the anchor
process still correctly reports judge-versus-anchor disagreement, but its interpretation as
\emph{judge} drift weakens, since the disagreement could now be the world moving under a
fixed yardstick. A periodic anchor refresh, re-labeling against the current standard,
re-establishes the baseline.

\paragraph{Pooled anchor family.} The anchor e-process aggregates over rubric$\times$direction
only; it carries no topic dimension. This is a deliberate power/budget choice: judge drift is
plausibly global, and a per-topic anchor family would split the $k$ anchors across cells and
multiply the Bonferroni factor by $K$. The cost is reduced power against \emph{topic-localized}
judge drift --- a judge that changed only on, say, code-heavy items. Topic stratification lives
entirely on the system-process side.

\paragraph{\texttt{both} is reported as \texttt{judge}, by design.} When both families fire
within the guard window the verdict is \texttt{judge} (\cref{eq:attribution}): a contaminated
$\strong$ invalidates the main alarm, so we do not attempt simultaneous attribution of a
co-occurring system regression. The honest reading is that a confirmed judge change suspends
trust in the main verdict until the judge is re-anchored.

\paragraph{No optimality.} We claim validity and identification, not efficiency. The plug-in
bet $\lambda$ approximates the log-optimal bet rather than realizing it, and the attribution
rule is not claimed to minimize misattribution at a fixed anchor budget. The race monotonicities
of \cref{prop:race} say which direction each knob moves the spill, not that the knobs are set
optimally.

\paragraph{System power is cell-dependent, and the horizon is single.} The disentanglement
properties are cell-independent by construction, but the main monitor's \emph{system} power is
not: it is near zero on the cheap-judge-blind rubrics (complexity, verbosity) and on the math
cells, where the regressed-blind pool is empty or the cheap judge is uninformative. A missed
system regression fails safe to \texttt{none} --- never to a false \texttt{judge}. The
experiments use a single monitoring horizon ($N = 1200$, change point at $300$) and a
representative token-cost model (input/output token counts at measured prices), which ignores
topic-assignment overhead; absolute cost fractions should be read as representative, not
universal.

\paragraph{The honest cost menu.} There is no single price. The system traces a menu against
strong-evaluating every item ($1.0$): a cheap-but-deaf configuration (escalation $\pi_0$ with
sparse rate-$20$ anchors) monitors at $\approx 0.21$ but misses most system regressions at the
hard cell; the featured powered-and-guarded configuration (fixed@$0.3$ main with rate-$5$
anchors) costs $\approx 0.64$ and catches system drift with zero misattribution. Notably, the
anchor budget is the \emph{smaller} part of the increment from cheap to powered: of the $0.43$
cost-fraction gap, the anchor share is only $0.20$ --- most of the added cost buys main-process
power, not anchor coverage.

\paragraph{A practical recipe.} These limitations compose into a deployment order, which is the
payoff for a practitioner. First, pick the main configuration on the cost/power frontier of
\cref{sec:setting} --- this fixes the system power you can afford and, with it, the
contaminated-fire latency the anchors must beat. Second, size the anchor budget $(k$, interleave
rate$)$ so that anchor detection latency, at the drift magnitudes you actually care about, comes
in \emph{under} the main's contaminated-fire latency --- the co-provisioning band of
\cref{sec:exp-coprovision}, not a bolted-on afterthought. Third, widen the guard $\guard$ to
absorb the residual spill: it is the cheapest knob, buying attribution correctness with latency
rather than dollars, and (\cref{prop:race}) it can never manufacture a false \texttt{judge}
verdict.

\section{Conclusion}\label{sec:conclusion}

Every team that monitors an LLM product with an LLM judge eventually hits the same question:
when the dashboard turns red, is it the product or the judge? We have shown that a small,
fixed set of human-labeled anchors, re-scored by the current judge and watched by one extra
anytime-valid e-process, answers it --- separating system drift from judge drift into an
explicit three-state verdict with three transparent
knobs (anchor size $k$, interleave rate $1/\rho$, and guard width $\guard$) and a guarantee
that judge drift is never charged to the system beyond $\alphajudge$. The construction is
validated not only on synthetic drift but on two \emph{real} judge changes --- a silent version
bump and a strict-prompt policy update --- across two domains, with nothing re-tuned between
them; where the domains differ, the differences are themselves predicted by the attribution
race (a larger judge shift yields faster anchors, perfect attribution, and a co-provisioning
band shifted toward slower anchor rates). Three directions remain open: principled anchor-refresh policies that keep $\human$ a
valid reference as standards evolve; per-stratum anchor families that recover power against
topic-localized judge drift; and optimal betting and attribution rules that minimize
misattribution at a fixed anchor budget.

\bibliographystyle{plainnat}
\bibliography{refs}
\appendix

\section{Proofs}\label{app:proofs}

We prove Propositions~\ref{prop:validity}--\ref{prop:orthogonality} as stated in
\cref{sec:anchor}, in the same notation. Throughout, ``past'' at position $i$ denotes the
$\sigma$-algebra $\mathcal{F}_{i-1}$ generated by the anchor observations
$\gapstat_1, \dots, \gapstat_{i-1}$ (and, for the main process, the stream and acquisition
history up to $i-1$). All validity claims are inherited from standard
test-(super)martingale results; the contribution is the construction they apply to, not new
probability theory.

\begin{proof}[Proof of \cref{prop:validity} (anytime-validity)]
Fix a rubric--direction pair $(r,d)$ and write $e_i = 1 + \lambda_i\, s_d\,(\gapbase[r] -
\gapstat_i[r])$ with $s_{\text{below}} = +1$, $s_{\text{above}} = -1$
(\cref{eq:anchor-evalue}).

\emph{Nonnegativity.} By \cref{eq:gap}, $\gapstat_i[r] \in [0,1]$ and $\gapbase[r] \in
(0,1)$, so the signed gap $s_d\,(\gapbase[r] - \gapstat_i[r])$ lies in $[-1, 1]$. The bet is
constrained to $\lambda_i \in [0, \lambda_{\max}(\gapbase[r], d)]$, where the directional cap
$\lambda_{\max}(\gapbase[r], d)$ is exactly the largest $\lambda \geq 0$ for which $1 + \lambda
\cdot s_d\,(\gapbase[r] - g)$ stays nonnegative for every attainable $g \in [0,1]$ (for $d =
\text{below}$ the binding case is $g = 1$, giving $\lambda_{\max} = 1/(1-\gapbase[r])$; for $d
= \text{above}$, $g = 0$, giving $\lambda_{\max} = 1/\gapbase[r]$). Hence $e_i \geq 0$ for all
$i$.

\emph{Conditional mean.} The bet $\lambda_i$ is predictable --- a function of
$\gapstat_1, \dots, \gapstat_{i-1}$ only (in code the plug-in $\lambda \approx
\operatorname{mean}(d)/\operatorname{mean}(d^2)$ computed from past gaps). Under
$H_0^{\mathrm{anc}}$ we have $\mathbb{E}[\gapstat_i[r] \mid \mathcal{F}_{i-1}] = \gapbase[r]$,
so
\[
  \mathbb{E}[e_i \mid \mathcal{F}_{i-1}]
  = 1 + \lambda_i\, s_d\,\big(\gapbase[r] - \mathbb{E}[\gapstat_i[r] \mid \mathcal{F}_{i-1}]\big)
  = 1 + \lambda_i\, s_d \cdot 0 = 1,
\]
using predictability of $\lambda_i$ to pull it outside the conditional expectation.

\emph{Supermartingale.} Therefore $\mathbb{E}[\wealth_n^{(r,d)} \mid \mathcal{F}_{n-1}] =
\wealth_{n-1}^{(r,d)} \cdot \mathbb{E}[e_n \mid \mathcal{F}_{n-1}] = \wealth_{n-1}^{(r,d)}$,
with $\wealth_0^{(r,d)} = 1$ (empty product). Each $(\wealth_n^{(r,d)})_n$ is a nonnegative
martingale, hence a nonnegative supermartingale.

\emph{Crossing bound and union.} Ville's inequality \citep{ville1939} for a nonnegative
supermartingale with unit initial value gives, for any level $\beta > 0$,
$\mathbb{P}(\exists n: \wealth_n^{(r,d)} \geq \beta) \leq 1/\beta$. Taking $\beta = 2R /
\alphajudge$ bounds each pair's ever-crossing probability by $\alphajudge / 2R$. A union bound
over the $2R$ rubric--direction pairs gives
\[
  \mathbb{P}\Big(\exists n: \max_{r,d}\wealth_n^{(r,d)} \geq 2R/\alphajudge\Big)
  \;\leq\; 2R \cdot \frac{\alphajudge}{2R} = \alphajudge .
\]
Ville's inequality is time-uniform, so the bound holds at any data-dependent stopping time.
\end{proof}

\begin{remark}
The bound needs no distributional assumptions beyond boundedness: the gaps need not be
Gaussian, i.i.d.\ across rubrics, or stationary in variance. It also needs no held-out
calibration streams and no fixed horizon, in contrast to the classical change detectors of
\cref{sec:exp-baselines}, which set a threshold against $H_0$ streams and are calibrated only
at that horizon.
\end{remark}

\begin{remark}[Exactness of the baseline]
Anchors are sampled \emph{with replacement} from the fixed set $\anchorset$, so under
$H_0^{\mathrm{anc}}$ the conditional-mean condition $\mathbb{E}[\gapstat_i[r] \mid
\mathcal{F}_{i-1}] = \gapbase[r]$ holds by construction of $\gapbase$ up to the finite-sample
error of $\gapbase$ itself. Because $\gapbase[r]$ is defined as the \emph{exact} mean rescaled
gap of the time-0 judge over all of $\anchorset$, and the time-0 judge scores are cached, this
error is zero by construction: the unchanged judge re-scoring a uniformly drawn anchor has gap
mean exactly $\gapbase[r]$.
\end{remark}

\begin{proof}[Proof of \cref{prop:identification} (identification)]
Each anchor observation at any position is a deterministic function of (a) the fixed items
$\anchorset$, (b) their frozen labels $\human$, and (c) the current judge, through
\cref{eq:gap}. System drift --- any change in the distribution of \emph{new} stream items or
in their true quality --- alters none of (a)--(c), so it leaves the anchor gap distribution
unchanged. Hence $H_0^{\mathrm{anc}}$ can be violated only by a change in the judge.

\emph{(a)} Under pure system drift the judge is unchanged, so $H_0^{\mathrm{anc}}$ holds. A
\texttt{judge} verdict (\cref{eq:attribution}) requires $\tau_{\mathrm{anc}} < \infty$, i.e.\
the anchor family fires; \cref{prop:validity} bounds the probability of that event by
$\alphajudge$ at any horizon, and this bound does not involve $\guard$. Therefore
$\mathbb{P}(\text{verdict} = \texttt{judge} \mid \text{pure system drift}) \leq \alphajudge$,
uniformly in $\guard$.

\emph{(b)} Under no drift at all, both nulls hold. The event $\{\text{verdict} \neq
\texttt{none}\}$ requires at least one family to fire, i.e.\ $\{\tau_{\mathrm{sys}} < \infty\}
\cup \{\tau_{\mathrm{anc}} < \infty\}$. By the main-process guarantee (\cref{sec:setting},
inherited from \citet[Thm.~2.1]{csillag2025ppe} with predictable $\pi$ and Ville's inequality)
$\mathbb{P}(\tau_{\mathrm{sys}} < \infty) \leq \alphasys$, and by \cref{prop:validity}
$\mathbb{P}(\tau_{\mathrm{anc}} < \infty) \leq \alphajudge$. A union bound gives
$\mathbb{P}(\text{verdict} \neq \texttt{none}) \leq \alphasys + \alphajudge$; the two budgets
are separate by design.
\end{proof}

\begin{proof}[Proof of \cref{prop:race} (the attribution race)]
Suppose judge drift has occurred (with or without simultaneous system drift). By the verdict
rule \cref{eq:attribution}, $\text{verdict} = \texttt{system}$ holds exactly when
$\tau_{\mathrm{sys}} < \infty$ and ($\tau_{\mathrm{anc}} = \infty$ or $\tau_{\mathrm{anc}} >
\tau_{\mathrm{sys}} + \guard$), which is precisely the event $\{\tau_{\mathrm{sys}} + \guard <
\tau_{\mathrm{anc}}\}$ (adopting the convention $\tau_{\mathrm{sys}} + \guard < \infty =
\tau_{\mathrm{anc}}$ when the anchors never fire). Hence
\[
  \mathbb{P}(\text{verdict} = \texttt{system} \mid \text{judge drift})
  = \mathbb{P}(\tau_{\mathrm{sys}} + \guard < \tau_{\mathrm{anc}}).
\]

\emph{Monotonicity in $\guard$.} For fixed sample paths of the two stopping times, the
indicator $\mathbf{1}\{\tau_{\mathrm{sys}} + \guard < \tau_{\mathrm{anc}}\}$ is non-increasing
in $\guard$ pointwise; taking expectations preserves the order, so the misattribution
probability is monotonically non-increasing in $\guard$.

\emph{Monotonicity in the anchor budget.} Increasing the interleave rate $1/\rho$ or the
anchor set size $k$ supplies more anchor evidence per position, which stochastically decreases
$\tau_{\mathrm{anc}}$ (first-order stochastic dominance: more accumulated wealth per step
reaches the threshold $2R/\alphajudge$ no later). A stochastically smaller $\tau_{\mathrm{anc}}$
makes the event $\{\tau_{\mathrm{sys}} + \guard < \tau_{\mathrm{anc}}\}$ less likely, so the
misattribution probability is non-increasing in $1/\rho$ and in $k$.

\emph{Monotonicity in main power.} Increasing the main process's sampling power (more strong
calls on the contaminated $\strong$) stochastically decreases $\tau_{\mathrm{sys}}$, by the
same first-order argument applied to the main e-process: more strong labels per step grow the
contaminated wealth no slower toward its threshold. A stochastically smaller
$\tau_{\mathrm{sys}}$ makes the misattribution event more likely, so the probability is
non-decreasing in main power.

These are stochastic-dominance arguments at the level of rigor of the formal-statements
source; we do not claim a sharper coupling.
\end{proof}

\begin{remark}
The race involves no validity trade-off. Raising $\guard$ delays a \texttt{system} attribution
but cannot create false \texttt{judge} verdicts beyond the $\alphajudge$ of
\cref{prop:identification}(a), because the anchor bound of \cref{prop:validity} holds at
\emph{any} horizon --- this time-uniformity is exactly what makes the guard window free.
Detection is also untouched: the total event $\{\text{drift flagged at all}\} =
\{\tau_{\mathrm{sys}} \wedge \tau_{\mathrm{anc}} < \infty\}$ does not depend on $\guard$; the
rule only decides the label. Empirically (\cref{sec:exp-harsh}) the spill shrinks monotonically
in $\guard$ ($29 \to 18 \to 10$ of $120$ at $\guard = 50/150/300$) and in the anchor rate, and
grows with main power ($0.03 \to 0.33$ at rate-$5$ across the main ladder).
\end{remark}

\begin{proof}[Proof of \cref{prop:orthogonality} (orthogonality)]
The anchor observations $\gapstat_i$ are a deterministic function of $(\anchorset, \human,
\text{current judge})$ and the fixed interleave schedule (\cref{eq:gap}), none of which depend
on the main acquisition rule (policy, budget, or $\pi$). Therefore the law of the anchor
process --- and in particular $\tau_{\mathrm{anc}}$ --- does not depend on the main-process
configuration. Symmetrically, the anchor re-judgements are out-of-band calls on held-out items
that, by the anchor-set definition (\cref{sec:anchor}), are excluded from all main-process
calibration and never enter the main e-values; the anchor process never reads stream items.
Hence the main process's distribution does not depend on the anchor interleave. Given the
judge state, the two stopping times are functions of disjoint randomness, and the design splits
cleanly: the anchor budget $(k, \rho)$ and guard $\guard$ govern judge-drift latency and the
race; the main configuration governs system power; and $\alphajudge, \alphasys$ are separate
Bonferroni budgets.
\end{proof}

\begin{remark}
Orthogonality holds for the two \emph{processes}; the \emph{verdict} still couples them through
the race of \cref{prop:race} (the comparison $\tau_{\mathrm{anc}}$ vs $\tau_{\mathrm{sys}} +
\guard$ reads both stopping times). Result~5 of \cref{sec:exp-coprovision} verifies the
process-level orthogonality empirically: judge latency ($830/695/430$) is constant across all
five main configs, and each main's system power is constant across anchor rates.
\end{remark}

\paragraph{What is not claimed.} Two boundaries, transcribed from the formal-statements source.
\emph{No optimality}: the plug-in $\lambda$ approximates the log-optimal bet, and the
attribution rule is not claimed to minimize misattribution at a fixed budget. \emph{Anchor
staleness}: \cref{prop:identification} assumes the human labels $\human$ remain a valid
reference; if the anchor items' \emph{true} quality standard drifts conceptually (e.g.\ the
rubric meaning changes), the anchor process correctly reports judge-vs-anchor disagreement, but
its reading as judge drift weakens until a periodic anchor refresh re-establishes the baseline.
Finally, the anchor family is pooled across strata (rubric$\times$direction only), so a judge
change confined to one stratum is detected at reduced power; topic stratification lives entirely
on the system-process side.

\section{The Cost-Aware Foundation in Detail}\label{app:c1}

This appendix gives the full cost-frontier table, the ten-method prior-work comparison, and
the trigger ablations summarized in \cref{sec:setting}. All numbers are for the semi-synthetic
localized blind-spot drift in the helpfulness$\times$creative cell of HelpSteer2 (stream $N =
1200$, change at $300$), averaged over $60$ Monte-Carlo seeds, at $\alphasys = 0.1$.

\paragraph{Cost model recap.} Each judge call costs $\text{(input tokens)}\cdot\text{(input
price)} + \text{(output tokens)}\cdot\text{(output price)}$ at representative counts of
$\approx 600$ input and $\approx 30$ output tokens, giving $\$0.000195$ per cheap call and
$\$0.00156$ per strong call. The cheap judge runs on all $L$ processed items and the strong
judge on $S$ sampled items, so relative to strong-evaluating every item the
$\text{cost-fraction} = (L c_{\cheap} + S c_{\strong}) / (L c_{\strong}) = 0.125 + S/L$, with
full evaluation ($\pi \equiv 1$) equal to $1.0$ by definition. The $0.125$ is the irreducible
cheap floor and $S/L$ is the realized strong-sampling rate.

\subsection{The cost--reliability frontier}

\cref{tab:frontier} reports the per-stratum monitor across both acquisition modes.
Latency is the median steps from the change point over uncensored reps; censor is the fraction
of reps that never alarmed within the horizon.

\begin{table}[h]
\centering\small
\caption{Cost--reliability frontier for the per-stratum monitor (helpfulness$\times$creative,
$y_{\mathrm{low}} = 0.25$, 60 seeds). Latency = median steps to detection over uncensored reps;
censor = fraction never alarming within the horizon ($N = 1200$, change at $300$).}
\label{tab:frontier}
\begin{tabular}{l ccccc}
\toprule
approach & latency & censor & cost-fraction & false-alarm & in-control cost \\
\midrule
escalate $\pi_0 = 0.01$            & ---  & 1.00 & ---           & 0.000 & 0.142 \\
escalate $\pi_0 = 0.02$            & 750  & 0.95 & \textbf{0.21} & 0.000 & 0.154 \\
escalate $\pi_0 = 0.05$            & 710  & 0.82 & \textbf{0.23} & 0.000 & 0.184 \\
escalate $\pi_0 = 0.1$             & 706  & 0.53 & \textbf{0.28} & 0.000 & 0.236 \\
fixed uniform @0.1                 & 789  & 0.98 & 0.24          & 0.000 & 0.226 \\
fixed @0.2 (uni/strat/bandit)      & ${\sim}660$ & 0.50 & 0.33   & 0.000 & 0.327 \\
fixed @0.3 (uni/strat/bandit)      & ${\sim}585$ & 0.28 & 0.43   & 0.000 & 0.428 \\
\textbf{full strong-eval ($\pi = 1$)} & \textbf{327} & 0.02 & \textbf{1.00} & 0.000 & 1.125 \\
\bottomrule
\end{tabular}
\end{table}

Every detector catches the drift at cost-fraction $0.21$--$0.43$, about one third of full
strong evaluation, with false-alarm $0.000$ across all $16$ policy points (empirically
anytime-valid). The two modes trace a frontier: escalation is cheapest ($0.21$--$0.28$),
sitting at the $0.125 + \pi_0$ floor in-control --- near-zero strong calls when nothing is
wrong; fixed budgets buy reliability (@0.2 $\to$ censor $0.50$, @0.3 $\to$ censor $0.28$); full
evaluation is fastest (latency $327$) at full price. Escalation $\pi_0 = 0.1$ (cost $0.28$,
censor $0.53$) is about as reliable as fixed @0.2 (cost $0.33$, censor $0.50$) but cheaper,
missing the strict censor-$\leq 0.5$ cut by one rep. This drift is intrinsically hard (even
full-eval censors $2\%$), which compresses escalation's margin; a stronger or cleaner drift, or
a higher $\pi_0$, lets escalation dominate the frontier outright.

\subsection{The ten-method prior-work comparison}

We place the four prior-work families in one harness on one frontier, calibrated to the same
false-alarm level. \cref{tab:roster} lists what each method sees and acquires; ``sees $\cheap$''
means it runs the cheap judge on every item, and the cost-fraction charges each method only for
what it calls.

\begin{table}[h]
\centering\small
\setlength{\tabcolsep}{4pt}
\caption{The ten-method roster. ``isolates'' names the design axis each method isolates.}
\label{tab:roster}
\begin{tabular}{@{}l l >{\raggedright\arraybackslash}p{2.65cm} c l c >{\raggedright\arraybackslash}p{2.45cm}@{}}
\toprule
group & \# & method & sees $\cheap$ & acquires $\strong$ & cost-frac & isolates \\
\midrule
\multirow{3}{*}{A. cheap-only} & A1 & same e-process on $\cheap$        & \checkmark & never        & 0.125       & cheap carries no drift info \\
                & A2 & Page--Hinkley on $\cheap$         & \checkmark & never        & 0.125       & concept-drift family \\
                & A3 & Shiryaev--Roberts on $\cheap$     & \checkmark & never        & 0.125       & LR/e-process detector \\
\addlinespace
B. label-only   & B1 & e-process on $\strong$, no PPI    & ---        & fixed $\tau$ & $\tau$      & value of the cheap proxy \\
\addlinespace
C. global PPI   & C1 & per-rubric PPI, no strata         & \checkmark & fixed $\tau$ & $0.125{+}\tau$ & value of stratification \\
\addlinespace
\multirow{4}{*}{\shortstack[l]{D. per-stratum\\PPI}} & D1 & fixed-budget $\tau$ & \checkmark & fixed $\tau$ & $0.125{+}\tau$ & --- (fixed-budget foil) \\
                & D2 & covariate: $\pi{\uparrow}$ as $\cheap{\to}q_0$ & \checkmark & covariate $\cheap$ & $0.125{+}S/L$ & covariate vs e-wealth \\
                & D3 & Csillag App B.2: $\pi \propto$ pred.\ $e$-growth & \checkmark & covariate $\cheap$ & $0.125{+}S/L$ & the published active-PPI rule \\
                & D4 & escalation (ours): $\pi{\uparrow}$ with $e$-wealth & \checkmark & past $e$-evidence & $0.125{+}S/L$ & --- (our trigger) \\
\addlinespace
E. reference    & E1 & full strong-eval $\pi = 1$        & ---        & every item   & 1.0         & speed ceiling \\
\bottomrule
\end{tabular}
\end{table}

The prior-work mapping: A2/A3 are the concept-drift family (Page--Hinkley
\citep{page1954}/Shiryaev--Roberts \citep{shiryaev1963}); B1 is label-only SAVI; C1 is the
global (un-stratified) PPI e-process of \citet{csillag2025ppe}; D2 is Zrnic--Cand\`es
uncertainty-triggered active inference \citep{zrnic2024active}; D3 is the Csillag App~B.2
active-PPI rule; D4 is ours; E1 is the naive ground-truth-everything monitor. Every e-process
method is false-alarm-valid by construction (Bonferroni), and all four per-stratum PPI rules
(D1--D4) share the \emph{same} per-item bet cap $\lambda_{\max}(\pi_i)$ on their realized
(predictable) acquisition probability, so no method gets a betting head-start.

\paragraph{The acquisition-rule headline.} \cref{tab:triggers} ranks the four per-stratum PPI
rules, which differ \emph{only} in the acquisition rule, against drift strength. ``Detects''
means censor $\leq 0.5$ and false-alarm $\leq \alpha$. Because the strong label $\strong$ lives
on the HelpSteer2 5-level grid $\{0, 0.25, 0.5, 0.75, 1.0\}$, the four-point drift sweep
($y_{\mathrm{low}} \in \{0.15, 0.25, 0.35, 0.45\}$) collapses to two effective magnitudes in
this cell: $y_{\mathrm{low}} = 0.15$ draws a pool of one extreme item (severe), while
$y_{\mathrm{low}} \in \{0.25, 0.35, 0.45\}$ draw the \emph{identical} 21-item pool (mild,
byte-identical in the raw JSON). We report the comparison as severe vs mild.

\begin{table}[h]
\centering\small
\setlength{\tabcolsep}{4pt}
\caption{Acquisition-rule ranking vs drift strength (per-stratum PPI; D1--D4 differ only in the
trigger, sharing the per-item bet cap). Severe = $y_{\mathrm{low}} = 0.15$; mild =
$y_{\mathrm{low}} \geq 0.25$.}
\label{tab:triggers}
\begin{tabular}{@{}l p{0.36\textwidth} p{0.36\textwidth}@{}}
\toprule
& severe ($y_{\mathrm{low}} = 0.15$) & mild ($y_{\mathrm{low}} \geq 0.25$) \\
\midrule
D1 fixed-budget        & best cheap detector: @0.2 censor 0.12 (cost 0.33), @0.3 censor 0.05 (cost 0.43) & @0.2 censor 0.50 (cost 0.33), @0.3 censor 0.27 (cost 0.43) \\
D2 covariate-boundary  & detects fast but by oversampling: censor 0.00 at every $\pi_0$, latency ${\sim}185$, cost ${\sim}0.87$ & detects, still by oversampling: censor 0.00 at every $\pi_0$, latency ${\sim}430$, cost ${\sim}0.88$ \\
D3 covariate-egrowth   & weak: $\pi_0 = 0.1$ censor 0.80 (cost 0.24), else censor 1.00 & fails (censor $\geq 0.95$ at every $\pi_0$) \\
D4 escalation (ours)   & strong and cheap: $\pi_0 = 0.1$ censor 0.22 (cost \textbf{0.28}) & ties fixed-budget: $\pi_0 = 0.1$ censor 0.53 (cost 0.28) $\approx$ D1@0.2 censor 0.50 (cost 0.33) \\
\bottomrule
\end{tabular}
\end{table}

The covariate-triggered prior-art rules do not fail to \emph{detect} --- they fail to detect
\emph{efficiently}, each for a distinct reason. D2 (boundary) detects on both drifts but only
by oversampling to cost $\approx 0.87$--$0.89$ (near full-eval): its ``label near the decision
boundary'' heuristic fires near \emph{every} rubric's bar across the $30$ cells, so $\pi$ is
pushed high regardless of where the drift is. D3 (the published Csillag App~B.2 rule) is inert:
its predicted-$e$-growth trigger reads identically zero because the cheap judge overrates
\emph{every} healthy bar ($q_0 - \cheap < 0$ on all five rubrics: helpfulness $-0.18$,
complexity $-0.23$, verbosity $-0.27$), so it degenerates to fixed-budget at rate $\pi_0$ and
fails on the mild drift. D4 (ours), keying on accumulated $e$-evidence rather than the blind
cheap covariate, detects at cost $0.28$ --- roughly one third of D2's --- by spending only once
evidence accrues; on the faint mild drift it merely \emph{ties} a matched fixed budget
(D4@0.1 censor 0.53 cost 0.28 $\approx$ D1@0.2 censor 0.50 cost 0.33), exactly the regime of
\citet{sfyraki2026adaptive} where an adaptive trigger buys essentially nothing over fixed-$\tau$
for prediction-powered mean estimation. We report the tie as-is. The central result, restated:
keying acquisition on the blind cheap covariate is either wasteful (D2 oversamples to
$\approx 0.88$) or inert (D3 reads zero under the overrating bias); keying on accumulated
$e$-evidence (D4) detects at one third of that cost.

\paragraph{Two further contrasts.} \emph{Stratification vs global}: per-(rubric$\times$topic)
stratification is decisive. At every matched budget, global PPI (C1) censors strictly worse
than per-stratum fixed-budget (D1) --- severe @0.2 global $0.68$ vs stratified $0.12$, @0.3
$0.58$ vs $0.05$; on the mild drift global PPI censors $1.00$ at all three budgets while
stratified detects at @0.2 ($0.50$) and @0.3 ($0.27$). A localized blind-spot drift is diluted
to invisibility when pooled. \emph{Classic detectors do not transfer}: the two concept-drift
detectors are threshold-calibrated to family-wise false-alarm $\approx \alpha$ on a held-out,
disjoint $H_0$ seed block, yet off that sample held-out Page--Hinkley records false-alarm
$0.117$ (just over $\alpha$) and held-out Shiryaev--Roberts $0.267$ (well over $\alpha$) --- the
realized maximum false-alarm across all methods and settings, $0.267$, is driven entirely by
the classic SR detector, while the e-process spine holds at false-alarm $\leq 0.017$ (the lone
non-zero e-process value is covariate-boundary $\pi_0 = 0.1$ at $0.017$, one $H_0$ alarm in
$60$, within $\alpha$). The cheap proxy alone never sees this drift: all three cheap-only
methods censor near $1.0$ everywhere (A1 ablation $1.00$, A3 SR $1.00$, A2 Page--Hinkley $0.98$
severe / $0.92$ mild), by construction of the blind-spot drift.

\section{All-Cells Results}\label{app:cells}

\cref{tab:cells} gives the full per-(topic$\times$rubric) cell table summarized in
\cref{sec:exp-generalize}, for the featured configuration (fixed@0.3 main, $k = 200$ anchors at
rate $1/5$, guard $\guard = 50$, synthetic shift $-0.25$, $20$ seeds) on HelpSteer2. The two
near-ceiling coherence cells coding:coherence and math:coherence are skipped: their
regressed-blind pool ($\strong \leq 0.5 \wedge \cheap \geq 0.6$) is empty because coherence
sits at ${\sim}0.92$. Columns: judge detection rate, median judge-drift latency (items),
judge$\to$system spill rate, false-judge rate, and system detection rate.

The C2 properties are cell-independent by construction --- the anchor process never looks at
the main-process cell. Judge detection is $1.00$ in $27/28$ cells (and $0.97$ in
factual\_qa:coherence), at latency $98$--$128$ items everywhere; judge$\to$system is $0.00$ in
$27/28$ ($0.03$ in that same cell); and the false-judge rate is at most $0.10$, under
$\alphajudge$. Only the system detection rate varies, a pure C1 sensitivity property:
correctness and coherence cells detect at $0.80$--$1.00$, helpfulness in the middle
($0.45$--$0.95$), and complexity/verbosity $\approx 0$ everywhere (the rubrics where $\cheap$ is
blind and the regressed pools are thin), with all math cells $\approx 0$. Every missed system
regression fails safe to \texttt{none}, never a false \texttt{judge}.

\begin{table}[h]
\centering\footnotesize
\setlength{\tabcolsep}{2pt}
\caption{Every topic$\times$rubric cell at the featured config (HelpSteer2, fixed@0.3 main,
$k{=}200$/rate-$5$ anchors, $\guard{=}50$, synthetic $-0.25$, 20 seeds). 28 runnable cells in
two column blocks; coding:coherence and math:coherence are skipped (empty regressed-blind pool).
jdet = judge detection, jlat = median judge latency (items), j$\to$s = judge$\to$system spill,
fj = false-judge, sdet = system detection.}
\label{tab:cells}
\begin{tabular}{@{}l ccccc c l ccccc@{}}
\toprule
cell & jdet & jlat & j$\to$s & fj & sdet & & cell & jdet & jlat & j$\to$s & fj & sdet \\
\midrule
coding:complexity & 1.00 & 122 & 0.00 & 0.00 & 0.00 & & math:complexity & 1.00 & 120 & 0.00 & 0.05 & 0.00 \\
coding:correctness & 1.00 & 122 & 0.00 & 0.00 & 0.80 & & math:correctness & 1.00 & 122 & 0.00 & 0.10 & 0.00 \\
coding:helpfulness & 1.00 & 125 & 0.00 & 0.00 & 0.60 & & math:helpfulness & 1.00 & 120 & 0.00 & 0.05 & 0.00 \\
coding:verbosity & 1.00 & 125 & 0.00 & 0.00 & 0.05 & & math:verbosity & 1.00 & 120 & 0.00 & 0.05 & 0.00 \\
creative:coherence & 1.00 & 110 & 0.00 & 0.00 & 1.00 & & other:coherence & 1.00 & 112 & 0.00 & 0.00 & 1.00 \\
creative:complexity & 1.00 & 118 & 0.00 & 0.05 & 0.00 & & other:complexity & 1.00 & 125 & 0.00 & 0.05 & 0.00 \\
creative:correctness & 1.00 & 115 & 0.00 & 0.00 & 1.00 & & other:correctness & 1.00 & 118 & 0.00 & 0.00 & 1.00 \\
creative:helpfulness & 1.00 & 115 & 0.00 & 0.00 & 0.65 & & other:helpfulness & 1.00 & 115 & 0.00 & 0.00 & 0.60 \\
creative:verbosity & 1.00 & 110 & 0.00 & 0.00 & 0.00 & & other:verbosity & 1.00 & 128 & 0.00 & 0.00 & 0.05 \\
factual\_qa:coherence & 0.97 & 98 & 0.03 & 0.05 & 0.95 & & reasoning:coherence & 1.00 & 118 & 0.00 & 0.03 & 1.00 \\
factual\_qa:complexity & 1.00 & 122 & 0.00 & 0.00 & 0.00 & & reasoning:complexity & 1.00 & 115 & 0.00 & 0.00 & 0.00 \\
factual\_qa:correctness & 1.00 & 115 & 0.00 & 0.00 & 0.95 & & reasoning:correctness & 1.00 & 120 & 0.00 & 0.05 & 0.85 \\
factual\_qa:helpfulness & 1.00 & 108 & 0.00 & 0.05 & 0.95 & & reasoning:helpfulness & 1.00 & 120 & 0.00 & 0.05 & 0.45 \\
factual\_qa:verbosity & 1.00 & 118 & 0.00 & 0.05 & 0.25 & & reasoning:verbosity & 1.00 & 112 & 0.00 & 0.05 & 0.10 \\
\bottomrule
\end{tabular}
\end{table}

The TL;DR replication of this sweep (\cref{sec:exp-generalize}) runs $19/20$ cells
(AskReddit:coherence skipped, near-ceiling): judge detection $0.85$--$0.97$ at latency
$118$--$145$ in every cell, judge$\to$system $\leq 0.15$ at $\guard = 50$, false-judge
$\leq 0.10 = \alphajudge$, and system detection $0.40$--$1.00$ (weakest on coverage cells,
the rubric where $\cheap$ is most generous).

\section{Cheap-Judge Calibration}\label{app:calibration}

The cheap judge $\cheap$ (\texttt{gemini-3.1-flash-lite}) is run on every item as free side
information for the prediction-powered main process. \cref{tab:calibration} reports how much
signal it carries per rubric, from the fitted-calibration study: a per-rubric $\cheap \to
\strong$ correction (shift / affine / isotonic) is fit on the first $600$ HelpSteer2 items and
measured on the held-out next $600$.

\begin{table}[h]
\centering\small
\setlength{\tabcolsep}{3pt}
\caption{Cheap-judge calibration against the strong judge, held-out ($n = 600$ test). Bias
$= \cheap - \strong$; corr is Pearson; affine slope $a$ is the fitted $\cheap \to \strong$
gain; RMSE$\downarrow$ is the held-out RMSE reduction of the affine correction vs raw $\cheap$.}
\label{tab:calibration}
\begin{tabular}{@{}l ccccl@{}}
\toprule
rubric & $\cheap - \strong$ bias & corr$(\cheap, \strong)$ & affine slope $a$ & RMSE$\downarrow$ vs raw $\cheap$ & read \\
\midrule
helpfulness & $+0.13$ & 0.81 & 0.87 & $-19\%$ & informative, just generous \\
correctness & $+0.01$ & 0.81 & 0.80 & $-6\%$  & already accurate --- don't correct \\
coherence   & $-0.02$ & 0.63 & 0.60 & $-16\%$ & tracks \\
complexity  & $+0.16$ & 0.54 & 0.40 & $-54\%$ & biased but correctable \\
verbosity   & $+0.22$ & 0.23 & 0.28 & $-49\%$ & blind: $\cheap \approx$ uninformative \\
\bottomrule
\end{tabular}
\end{table}

The corrections transfer cleanly: held-out residual bias $\approx 0$ on every rubric, and
calibration RMSE $\approx$ test RMSE (no overfit). The affine fit $\approx$ isotonic fit (since
$\cheap$ lives on a 5-level grid), and a 1-parameter \emph{shift} already removes most of the
error; the blind-spot structure is stable across the split ($|\Delta\text{bias}| \leq 0.01$,
$|\Delta\text{blind\%}| \leq 5$ points). The takeaway: calibration removes the cheap judge's
systematic generosity but cannot create resolution it never had. The affine slope $a$ measures
how much $\cheap$ is worth: $a \approx 1$ (helpfulness) means keep its signal and just shift it;
$a \approx 0$ (verbosity) means $\cheap$ is essentially the base rate, so the strong judge is
required there. Helpfulness and complexity become good calibrated proxies; verbosity does not
--- which is exactly the rubric where the localized blind-spot drift of \cref{app:c1} hides, and
where the large judge-version gap of \cref{tab:judgegap} appears.

\section{Judge Prompts}\label{app:prompts}

The strong and cheap judges share a scoring prompt; only the model snapshot differs. Drift is
injected on the strong judge in two ways: a real version bump (re-judging with a smaller model)
needs no prompt change, while the real \emph{harsh} change re-judges with the same model under a
deliberately stricter scoring prompt. We quote both prompt versions verbatim from the registry
(\texttt{driftjudge/data/judges.py}); the rubric list and prompt/response are interpolated at
the marked fields.

\paragraph{Baseline scoring prompt (\texttt{v1}).}
\begin{quote}\small\ttfamily
You are a strict evaluator. Score the assistant RESPONSE to the PROMPT on each rubric using an
integer 0-4 (HelpSteer2 scale: 0 worst, 4 best). Reply with ONLY a JSON object mapping each
rubric name to its integer score.\\
Rubrics: \{rubrics\}\\[2pt]
PROMPT:\\
\{prompt\}\\[2pt]
RESPONSE:\\
\{response\}
\end{quote}

\paragraph{Harsh scoring prompt (\texttt{v2-strict}).} A deliberately harsher scoring policy on
the same scale and model, emulating a real silent scoring-policy update (C2's contamination
direction). The money clause is the strict tie-break:
\begin{quote}\small\ttfamily
You are an exacting, skeptical evaluator. Score the assistant RESPONSE to the PROMPT on each
rubric using an integer 0-4 (HelpSteer2 scale: 0 worst, 4 best). Be strict: reserve 4 for
flawless work, penalize every error, omission, or unsupported claim you notice, and when torn
between two adjacent scores always choose the lower. Reply with ONLY a JSON object mapping each
rubric name to its integer score.\\
Rubrics: \{rubrics\}\\[2pt]
PROMPT:\\
\{prompt\}\\[2pt]
RESPONSE:\\
\{response\}
\end{quote}

The TL;DR summarization domain uses parallel variants \texttt{v1-summ} and \texttt{v2-strict-summ}
(same instructions, with POST/SUMMARY framing and the four summarization axes substituted for the
rubric meanings). An unknown \texttt{prompt\_version} raises. The harsh re-judge has $3$
HelpSteer2 items (and $1$ TL;DR item) that the strong endpoint cannot score (content filter);
these fall back to the undrifted $\strong$, so the contaminated stream is $1495/1498$
(HelpSteer2) and $1497/1498$ (TL;DR).

\section{Reproducibility}\label{app:repro}

The pipeline has two phases. The \emph{data layer} is a one-time paid Gemini cache (the only
network/API phase): every item is scored once by the cheap judge, once by the strong judge, and
once per drift slice, and the human labels are joined in. All experiments afterward are
\emph{compute-only}: they read the cached scores from \texttt{artifacts/} and re-run the
monitor in simulation, so reproducing every number below requires no API access once the cache
exists. The cached artifacts are \emph{not committed to the source tree} (\texttt{artifacts/} is
gitignored); re-collecting scores from a later judge version would yield different numbers ---
which is precisely the paper's point. For exact public reproduction, the intended release vehicle
is a separate versioned artifact archive, not committed source files. Figures regenerate from the
checked-in computed outputs via \texttt{make figures}
(\texttt{scripts/paper\_figures.py}).

\paragraph{Datasets and licensing.} HelpSteer2 \citep{helpsteer2} is released under CC-BY-4.0
(\textcopyright\ NVIDIA); the OpenAI summarize-from-feedback feedback data is released under
OpenAI's modified MIT license; and the underlying Webis TLDR corpus is listed under CC-BY-4.0.
The reproducibility artifact redistributes only derived judge scores and human labels/metadata,
with attribution, not raw source text. Gemini scores are Gemini-API outputs.

\paragraph{Data layer (one-time, paid).}
{\small\begin{verbatim}
HF_HUB_OFFLINE=1 uv run --extra pipeline python scripts/add_human_labels.py
GEMINI_API_KEY=... uv run --extra pipeline python scripts/judge_drift_data.py \
    --drift-snapshot gemini-3.5-flash --limit 1498
GEMINI_API_KEY=... HF_HUB_OFFLINE=1 uv run --extra pipeline python \
    scripts/judge_drift_data.py \
    --drift-snapshot gemini-3.1-pro-preview --prompt-version v2-strict --limit 1498
\end{verbatim}}

\paragraph{HelpSteer2 experiments (compute-only).}
{\small\begin{verbatim}
# Result 1b -- high-power main (fixed @0.3), matched vs old anchor budget:
uv run python scripts/c2_anchor_experiment.py --seeds 60 --k 200 --anchor-rate 5 \
    --main-mode fixed --main-budget 0.3
uv run python scripts/c2_anchor_experiment.py --seeds 60 --k 50 --anchor-rate 20 \
    --main-mode fixed --main-budget 0.3
# Result 3 -- real lenient bump, featured fixed@0.3 main:
uv run python scripts/c2_anchor_experiment.py --seeds 60 --k 200 --anchor-rate 5 \
    --main-mode fixed --main-budget 0.3 --real-drift-snapshot gemini-3.5-flash
# Result 4 -- real harsh drift, guard sweep (data study first):
uv run python scripts/study_judge_version_gap.py \
    --snapshot gemini-3.1-pro-preview --prompt-version v2-strict
for W in 50 150 300; do \
  uv run python scripts/c2_anchor_experiment.py --seeds 60 --k 200 --anchor-rate 5 \
      --guard $W --main-mode fixed --main-budget 0.3 \
      --real-drift-snapshot gemini-3.1-pro-preview --drift-prompt-version v2-strict; done
# Result 5 -- co-provisioning frontier (real harsh drift):
uv run python scripts/c2_anchor_experiment.py --seeds 30 --k 200 --guard 50 --frontier \
    --real-drift-snapshot gemini-3.1-pro-preview --drift-prompt-version v2-strict
# Result 6 -- every cell at the featured config -> artifacts/c2_cells.json:
uv run python scripts/c2_anchor_experiment.py --seeds 20 --k 200 --anchor-rate 5 \
    --main-mode fixed --main-budget 0.3 --all-cells
# Result 7 -- baselines on the identical anchor stream:
uv run python scripts/c2_baseline_comparison.py --seeds 60
\end{verbatim}}

\paragraph{TL;DR second dataset.} The TL;DR data layer is a separate one-time paid run (raw
files fetched per \texttt{driftjudge/data/tldr.py}); every experiment reuses the same drivers
with \texttt{-{}-dataset tldr} (results land in \texttt{*\_tldr.json} artifacts, so the
HelpSteer2 result JSONs are untouched). The TL;DR halves of every table and figure mirror the
HelpSteer2 commands above:
{\small\begin{verbatim}
# data layer (paid, one-time, ~6k calls):
GEMINI_API_KEY=... uv run --extra pipeline python scripts/tldr_data.py --max-workers 256
GEMINI_API_KEY=... uv run --extra pipeline python scripts/tldr_data.py --max-workers 256 \
    --drift-snapshot gemini-3.5-flash --prompt-version v1-summ
GEMINI_API_KEY=... uv run --extra pipeline python scripts/tldr_data.py --max-workers 256 \
    --drift-snapshot gemini-3.1-pro-preview --prompt-version v2-strict-summ
# experiments (compute-only):
E="uv run python scripts/c2_anchor_experiment.py --dataset tldr"
# confusion (matched + under-provisioned):
$E --seeds 60 --k 200 --anchor-rate 5 --main-mode fixed --main-budget 0.3
$E --seeds 60 --k 50 --anchor-rate 20 --main-mode fixed --main-budget 0.3
# lenient bump: anchor-budget sweep (escalation main) + featured fixed@0.3:
$E --seeds 60 --k 50 --anchor-rate 20 --real-drift-snapshot gemini-3.5-flash
$E --seeds 60 --k 150 --anchor-rate 8 --real-drift-snapshot gemini-3.5-flash
$E --seeds 60 --k 200 --anchor-rate 5 --real-drift-snapshot gemini-3.5-flash
$E --seeds 60 --k 200 --anchor-rate 5 --main-mode fixed --main-budget 0.3 \
    --real-drift-snapshot gemini-3.5-flash
# harsh drift: per-axis study + guard sweep:
uv run python scripts/study_judge_version_gap.py --dataset tldr
uv run python scripts/study_judge_version_gap.py --dataset tldr \
    --snapshot gemini-3.1-pro-preview --prompt-version v2-strict
for W in 50 150 300; do \
  $E --seeds 60 --k 200 --anchor-rate 5 --guard $W --main-mode fixed \
      --main-budget 0.3 --real-drift-snapshot gemini-3.1-pro-preview \
      --drift-prompt-version v2-strict; done
# co-provisioning frontier, every cell, and baselines:
$E --seeds 30 --k 200 --guard 50 --frontier \
    --real-drift-snapshot gemini-3.1-pro-preview --drift-prompt-version v2-strict
$E --seeds 20 --k 200 --anchor-rate 5 --main-mode fixed --main-budget 0.3 --all-cells
uv run python scripts/c2_baseline_comparison.py --dataset tldr --seeds 60
\end{verbatim}}

All experiment outputs land in gitignored JSON under \texttt{artifacts/} (e.g.\
\texttt{c2\_confusion.json}, \texttt{c2\_frontier.json}, \texttt{c2\_cells.json},
\texttt{c2\_baselines.json}); the C1 cost frontier is reproduced by
\texttt{uv run python scripts/c1\_stratified\_experiment.py -{}-seeds 60} and the ten-method
comparison by \texttt{uv run python scripts/c1\_method\_comparison.py -{}-seeds 60}.

\end{document}